\def\prob{\mathbb{P}}
\def\expt{\mathbb{E}}
\def\real{\mathbb{R}}
\def\natural{\mathbb{N}}
\DeclareMathOperator*{\argmax}{arg max}
\newcommand{\subscr}[2]{#1_{\textup{#2}}}
\newcommand{\supscr}[2]{#1^{\textup{#2}}}
\newcommand{\seqdef}[2]{\{#1\}_{#2}}
\newcommand\oprocendsymbol{\hbox{$\square$}}
\newcommand\oprocend{\relax\ifmmode\else\unskip\hfill\fi\oprocendsymbol}
\newcommand\bit[1]{\textit{\textbf{#1}}}
\def \bf {\textbf{}}
\def \it {\textit{}}
\def \mc {\mathcal}
\newtheorem{theorem}{Theorem}
\newtheorem{lemma}{Lemma}
\newtheorem{remark}{Remark}
\newtheorem{example}{Example}
\newtheorem{definition}{Definition}
\title{\LARGE \bf
Deterministic Sequencing of Exploration and Exploitation \\ for Reinforcement Learning
}
\author{Piyush Gupta and Vaibhav Srivastava
\thanks{This work has been supported in part by NSF Award IIS-1734272, CMMI-1940950, and ECCS-2024649.}
\thanks{{Piyush Gupta (guptapi1@msu.edu) and Vaibhav Srivastava (vaibhav@egr.msu.edu) are with the Department of Electrical and Computer Engineering, Michigan State University, East Lansing, Michigan, 48824, USA.}}
}
\begin{document}

\maketitle

\begin{abstract}
We propose Deterministic Sequencing of Exploration and Exploitation (DSEE) algorithm with interleaving exploration and exploitation epochs for model-based RL problems that aim to simultaneously learn the system model, i.e., a  Markov decision process (MDP), and the associated optimal policy. During exploration, DSEE explores the environment and updates the estimates for expected reward and transition probabilities. During exploitation, the latest estimates of the expected reward and transition probabilities
are used to obtain a robust policy with high probability. We design the lengths of the exploration and exploitation epochs such that the cumulative regret grows as a sub-linear function of time.
\end{abstract}

\section{Introduction}

Reinforcement Learning (RL) is used in solving complex sequential decision-making tasks in uncertain environments such as motion planning for robots~\cite{xin2017application, gupta2020towards}, personalized web services~\cite{barto2017some, ferretti2016automatic}, and the design of decision-support systems for human-supervisory control~\cite{gupta2022structural, zhou2006supervisory, gupta2019optimal}. Markov decision processes (MDPs)~\cite{sutton2018reinforcement} provide a natural framework for optimal decision-making under uncertainty and are used to model and solve numerous model-based RL problems. The objective of these problems is to simultaneously learn the system model and the optimal policy. While MDP formulation accounts for environment uncertainty by using stochastic models, MDP policies are known to be sensitive to errors in these stochastic models~\cite{ bertuccelli2008robust, 9654941}.

In many safety-critical systems, robust MDPs~\cite{wiesemann2013robust, nilim2004robust} are used to mitigate performance degradation due to uncertainty in the learned MDP. However, to reduce the system uncertainty, the agent must explore the environment and visit parts of the state space associated with high estimation uncertainty. Most often, RL algorithms use simple randomized methods to explore the environment, e.g. applying $\epsilon$-greedy policies~\cite{watkins1989learning} or adding random noise to continuous actions~\cite{lillicrap2015continuous}. 
The objective of the robust MDPs is conflicting with the exploration objective, { i.e., robust policy avoids the unexplored regions of the state space to optimize the worst-case performance while the objective of the exploration is to reduce the system uncertainty by visiting unexplored regions of the state space}. Therefore, to balance the trade-off between learning the MDP and designing a robust policy, we design a Deterministic Sequencing of Exploration and Exploitation (DSEE) algorithm, in which exploration and exploitation epochs of increasing lengths are interleaved.

There exist efficient algorithms for solving RL problems with provable bounds on the {sample complexity~\cite[Definition 1]{strehl2008analysis}}. In~\cite{strehl2008analysis}, authors analyze the Model-based Interval Estimation (MBIE) algorithm that applies confidence
bounds to compute an optimistic policy and show that the algorithm is {PAC-optimal~\cite[Definition 2]{strehl2008analysis}. They provide an upper bound on the algorithm's sample complexity given by $O\left( \frac{|\mathcal{S}|^2|\mathcal{A}|}{(1-\gamma)^6\epsilon^3}\log(\delta^{-1})\right)$ which is the maximum number of time steps until when the MBIE policy is not $\epsilon-$optimal with at least probability $1-\delta$, where $|\mathcal{S}|$, $|\mathcal{A}|$, are the cardinality of the state space and action space, respectively, $\gamma$ is the discount factor, and $\epsilon, \delta \in (0,1)$ are pre-defined constants.} A similar bound on the sample complexity is obtained for the R-max algorithm~\cite{kakade2003sample} which distinguishes the ``known" and ``unknown" states based on how often they have been visited. It explores by acting to maximize rewards under the assumption that unknown states deliver the maximum reward. UCRL2 algorithm~\cite{JMLR:v11:jaksch10a} relies on optimistic bounds on the reward functions and probability density functions and enjoys near-optimal regret bounds. A review of model-based RL algorithms with provable finite time guarantees can be found in \cite[Chapter 38]{lattimore2020bandit}. A major drawback of these algorithms is that they consider optimism in the face of uncertainty and hence, are not robust to the estimation uncertainties. Furthermore, these algorithms with random exploration might lead to a bad user experience in applications in which the RL agent seeks to learn human preferences for system optimization.

 To address these shortcomings, we propose a DSEE algorithm for model-based RL in which we design a deterministic sequence of exploration and exploitation epochs. The DSEE approach has been used in multi-arm bandit problems \cite{SV-KL-QZ:13, HL-KL-QZ:13,LW-VS:17i, nayyar2016regret} and multi-robot coordination~\cite{wei2021multi}. It allows for differentiation between exploration and exploitation epochs. The announced exploration may lead to a better user experience for the agents {(especially for human agents)} than random exploration at any time. For example, many personalized web services calibrate their recommendations intermittently by announced exploration, i.e., through surveys and user selection. Another advantage of the DSEE algorithm is that it allows for efficient exploration of the environment in multi-agent systems. Specifically, in multi-agent systems, exploration can be well-planned to cover all regions of the state-space through agent coordination which can be easily arranged  due to the deterministic structure of exploration and exploitation.

We design the DSEE algorithm with alternating sequences of exploration and exploitation. In exploration epochs, the algorithm learns the MDP, while in exploitation epochs, it uses a robust policy based on the learned MDP and the associated uncertainty. We design the lengths of the exploration and exploitation epochs such that the cumulative regret grows as a sub-linear function of time. 

The major contributions of this work are twofold: (i) we propose a DSEE algorithm for model-based RL problems and (ii) {we design the lengths of the exploration and exploitation epochs} such that 
the cumulative regret for the DSEE algorithm grows as a sub-linear function of time.
 
This manuscript is structured as follows: in Section~\ref{Problem Setup}, we provide background and formulate the problem. 
In Section~\ref{section:DSEE}, we provide an overview of the DSEE algorithm.
In Section~\ref{Section:Analysis}, we
analyze the DSEE algorithm and design the exploration and exploitation epochs such that the cumulative regret grows sub-linearly with time. We conclude in Section~\ref{Conclusions}.

\section{Background and Problem Formulation}\label{Problem Setup}

We focus on the model-based RL problems which aim to simultaneously learn the system model, i.e., a Markov decision process (MDP), and the associated optimal policy. We seek to design policies that are robust to uncertainty in the learned MDP. However, learning the MDP requires visiting parts of the state space associated with high uncertainty in estimates and has exactly the opposite effect of a robust policy. Therefore, to balance the trade-off between learning the MDP and designing a robust policy, we design a DSEE algorithm, in which exploration and exploitation epochs of increasing lengths are interleaved. In exploration epochs, the algorithm learns the MDP, while in exploitation epochs, it uses a robust policy based on the learned MDP and the associated uncertainty.


 Consider an MDP $(\mathcal{S}; \mathcal{A}, R, \mathbb{P}, \gamma)$, where $\mathcal{S}$ is the state space, $\mathcal{A}$ is the action space, {reward $R(s,a)$, for each $(s,a) \in \mathcal{S} \times \mathcal{A}$, is a random variable with support $[0, R_{\max}]$}, $\mathbb{P} : \mathcal{S \times A} \rightarrow \Delta_{|\mathcal{S}|}$ is the transition distribution, and $\gamma \in (0, 1)$ is the discount factor. Here, $\Delta_{|\mathcal{S}|}$ represents probability simplex in $\real^{|\mathcal{S}|}$, $|\cdot|$ represents the cardinality of a set. {Let $\overline{R}(s,a)$ be the expected value of $R(s,a)$}. We consider a finite MDP setting in which $|\mathcal{S}|$ and $|\mathcal{A}|$ are finite.

We assume that the rewards $R$ and the state transition distribution $\mathbb{P}$ are unknown a priori. Hence, during exploration, we estimate  $\overline R$ and $\mathbb{P}$ using online observations. {Let $(s, a)$ be any state-action pair where $s \in \mathcal{S}$ and $a \in \mathcal{A}$}. At any time $t$, let $n_t(s,a)$ be the number of times state-action pair $(s,a)$ is observed until time $t$. For each $(s,a)$, the empirical mean estimates $\hat{R}_t(s,a)$ and $\hat{\mathbb{P}}_t(s'|s,a)$, $s' \in \mathcal{S}$ are:
\begin{align}\label{eq:empirical reward}
    \hat{R}_t(s,a) &= \frac{1}{n_t(s,a)}\sum_{i=1}^{n_t(s,a)}r_i(s,a),  \text{and } \\
\label{eq:empirical prob} \hat{\mathbb{P}}_t(s'| s,a) &= \frac{n_t(s,a,s')}{n_t(s,a)},
\end{align}
respectively, where $r_i(s,a)$ is the immediate reward obtained in $(s,a)$ during observation $i \in \{1, \ldots, n_t(s,a)\}$ until time $t$ and $n_t(s,a,s')$ is the number of times the next state $s'$ is observed from $(s,a)$ out of $n_t(s,a)$ times.

Oftentimes, the uncertainty in probability transition matrices and mean reward function can be large, especially in the initial stages of learning due to limited observation data, which may lead to sub-optimal policies. Robust MDPs~\cite{wiesemann2013robust} mitigate the sub-optimal performance arising from this uncertainty by optimizing the worst-case performance over given uncertainty sets for reward function and probability transition matrices to obtain a robust policy.
%
%
%
Given, at time $t$, uncertainty sets $\mathcal{R}_t^U$ and ${\mathcal{P}_t^U}$ containing $\overline R$ and $\prob$, respectively,  
the robust MDP solves the following robust Bellman equation: 
    \begin{align}\label{Robust_Bellman}
 V^{R}_t(s) &=\max_{a\in \mathcal{A}} \min_{\tilde{R}_t \in \mathcal{R}^U_t, \ {\tilde{\mathbb{P}}}_t \in \mathcal{P}_t^U}\Big\{  \tilde{R}_t(s,a) + \nonumber\\
 &\quad \quad \quad \quad \quad \gamma\sum_{s'}{\tilde{\mathbb{P}}}_t(s'|s,a)V^{R}_t(s')  \Big\},
    \end{align}
to obtain a robust policy $\hat{\pi}^R_t = \argmax _{a \in \mathcal{A}}V^R_t$, which optimizes the worst-case performance through minimization with respect to the uncertainty sets $\mathcal{R}_t^U$ and  ${\mathcal{P}_t^U}${, where $V^R_t$ is the robust value function}.

The choice of these uncertainty sets 
are critical for the performance of the robust algorithm. A poor modeling choice can increase the computational complexity and result in a highly conservative policy~\cite{bertuccelli2012robust, 9654941}. 
To avoid these issues, during the exploitation epoch of the DSEE,  we construct these uncertainty sets based on the  estimates $\hat{R}_t$ and $\hat{\mathbb{P}}_t$ from the previous exploration epochs and Hoeffding bounds~\cite{wainwright2019high} for $\hat R_t$ (Lemma~\ref{lemma:reward_bound}) and $\hat{\mathbb{P}}_t$ (Lemma~\ref{lemma:probability_bound}). Subsequently, we utilize robust MDP to learn a policy that is robust to the estimation uncertainties with high probability. 
The convergence of the robust MDP with uncertain transition matrices to the uncertainty-free MDP can be shown under the assumption that the uncertainty sets converge to singleton estimates almost surely~\cite{iyengar2005robust, 9654941}.

\begin{definition}[\bit{Instantaneous and Cumulative Regret}] \label{definition1}
For a discounted and ergodic RL{~\cite{puterman2014markov}}, consider an algorithm {$\mathbb{A}$} that, at the end of the $(t -1)$-th step, returns a policy $\pi_t$ to be applied in the $t$-th step. For any state $s \in \mathcal{S}$, let $V^{*}(s)$ and $V^{\pi_t}(s)$ be the optimal value of the state and its value under the policy $\pi_t$, respectively. At any time $t$, the instantaneous regret $\mathfrak{R}(t)$ of the algorithm $\mathbb{A}$ is given by:
\begin{equation}~\label{eq: instantaneous_regret_definition}
    \mathfrak{R}(t)=\Vert V^*(s) - V^{\pi_t}(s) \Vert_{\infty},
\end{equation}
where $\Vert \cdot \Vert_{\infty}$ denotes the $L^{\infty}$-norm of a vector, and the cumulative regret $\mathbf{R}_T$  until time horizon $T$ is given by:
\begin{equation}~\label{eq: regret_definition}
    \mathbf{R}_T=\sum_{t=1}^{T} \mathfrak{R}(t) =\sum_{t=1}^{T}\Vert V^*(s) - V^{\pi_t}(s) \Vert_{\infty}. 
\end{equation}
\end{definition}
We design the exploration and exploitation epochs of the DSEE algorithm such that its cumulative regret grows as a sub-linear function of time. In the next section, we provide an overview of the DSEE algorithm.

\section{DSEE Algorithm}\label{section:DSEE}

\begin{algorithm}
	\caption{Deterministic Sequencing of Exploration and Exploitation (DSEE)}
	\label{algo1}
	\textbf{Input:} Set of states $\mathcal{S}$, Set of actions $\mathcal{A}$, Initial State $s_0$;\\
 \textbf{Set:} $\eta>1$, Sequences $\seqdef{\epsilon_j}{j \in \natural}$, $\seqdef{\delta_j}{j \in \natural}$, $\supscr{s}{end}_0 = s_0$, $s=s_0$;
 
  \textbf{Set:} $t =0$, $n(s,a)=0, n(s,a,s')=0, \mathrm{S}(s,a) =0, \forall s,a,s'$;
	\begin{algorithmic}[1]
		\For {epoch $j=1,2,\ldots$}
		
\textit{\% Exploration phase:}

\State $\rho_j \!\leftarrow\!\frac{\epsilon_j}{ 4+ \frac{2R_{\max}\gamma}{(1-\gamma)^2}}$;
\State $\mu \leftarrow 2\left[\log(2^{|\mathcal{S}|}-2) + \log\left(\frac{2|\mathcal{S}||\mathcal{A}|}{\delta_j}\right)\right]$;
\State $U_j  \leftarrow \max \Big\{ \frac{{(R_{\max})^2} \log(\frac{4|\mathcal{S}||\mathcal{A}|}{\delta_j})}{2\rho_j^2}   ,  \frac{\mu}{\rho_j^2}  \Big\}$

 \While{{$n(s,a)  < U_j, \; \forall (s,a)$ }
 \State $t \leftarrow t+1$;
		\State {Pick $a \sim \text{UNIF}(\mathcal{A})$ in current state $s$}} 
		\State Observe reward $\mathrm R$ and the next state $s'$
		
		\If{$\supscr{s}{end}_{j-1}$ has been visited in epoch $j$
		} 
		
		\State $n(s,a) \leftarrow n(s,a) +1$;
        \State $n(s,a, s') \leftarrow n(s,a, s') +1$;
        \State $\mathrm S(s,a) = \mathrm S(s,a) + \mathrm{R}$; 
        
		\EndIf
		\State $s \leftarrow s'$;
	\EndWhile
	
	\State $\supscr{s}{end}_{j} \leftarrow s$; 
	
	\State $\hat{R}_t(s,a) = \frac{\mathrm{S}(s,a)}{n(s,a)}, \ \forall (s,a)$;
		\State $\hat{\mathbb{P}}_t(s'| s,a) = \frac{n(s,a,s')}{n(s,a)}, \ \forall (s,a)$;
		
\textit{\% Exploitation phase:}
		
				\State Construct uncertainty sets  $\mathcal{R}_t^U$ and ${\mathcal{P}_t^U}$ using~\eqref{eq:uncertainty-sets}; 
		
		\State Compute ${V}^R_{t}(s)$ and $\hat{\pi}_t^R $ using ~\eqref{Robust_Bellman}; 
		
		\State Implement $\hat{\pi}_t^R $ for $\lceil \eta^j \rceil$ time steps; 
		
		\State $t \leftarrow t + \lceil \eta^j \rceil$;


		\EndFor
	\end{algorithmic} 
\end{algorithm} 

We design the DSEE algorithm for model-based RL under the following assumptions:
\begin{enumerate}
\item[(A1)] State space $\mathcal{S}$ and action space $\mathcal{A}$ are finite sets. 
\item[(A2)] The MDP is ergodic {under the uniform policy $\pi$,  i.e., 
under a policy $\pi$ that, in every state $s$, randomly selects the actions from $\mathcal{A}$ with equal probability}, the MDP admits a unique stationary distribution {$\phi_{\pi}(s): \mathcal{S} \rightarrow \Delta_{|\mathcal{S}|}$}, with $\phi_{\pi}(s) > 0$ for all $s$. 
\end{enumerate}

Ergodic MDP~\cite{puterman2014markov} (assumption (A2)) is a common assumption.
It ensures that the stationary distribution is independent of the initial distribution and all states are recurrent, i.e., each state $s$ is visited infinitely often and $\phi_{\pi}(s)>0$. We use this assumption to estimate the number of times each state is visited in $N$ time steps.  


Algorithm~\ref{algo1} shows an overview of the DSEE algorithm. In the DSEE algorithm, we design a sequence of alternating exploration and exploitation epochs.  Let $\alpha_i$ and $\beta_i$ be the lengths of the $i$-th exploration and exploitation epoch, respectively, where $i \in \mathbb{N}$. During an exploration epoch, we uniformly sample the action in {the current} state and update the estimates $\hat{R}_t$ and $\hat{\mathbb{P}}_t$. For a given sequence of $\seqdef{\epsilon_i}{i\in \natural}$ and $\seqdef{\delta_i}{i \in \natural}$ that we design in Section~\ref{Section:Analysis}, the length of the exploration epoch {$\alpha_i$} is determined to reduce the estimation uncertainty such that $\mathbf{P}(\Vert V^*(s) - {V^{\hat{\pi}^R_t}(s)}  \Vert_{\infty} \le \epsilon_i) \ge 1-\delta_i$ after the epoch, {where $\mathbf{P}(\cdot)$ 
denotes the probability measure,
and $V^{\hat{\pi}^R_t}(s)$ is the value of state $s$ under the robust policy $\hat{\pi}_t^R$ }.
In DSEE, we choose exponentially increasing lengths of the exploitation epochs {$\beta_i$}. During the exploitation epoch, we utilize the  estimates $\hat{R}_t$ and $\hat{\mathbb{P}}_t$ from previous exploration epochs and construct the uncertainty sets $\mathcal{R}^U$ and  $\mathcal{P}^U$ at time $t$. We use these uncertainty sets with a robust Bellman equation to learn a policy that is robust to the estimation uncertainties with high probability. In next section, we analyze the DSEE algorithm, and design the sequence of $\seqdef{\epsilon_i}{i\in \natural}$ and $\seqdef{\delta_i}{i \in \natural}$, such that the cumulative regret~\eqref{eq: regret_definition} grows as a sub-linear function of time.


\section{Analysis of DSEE algorithm}\label{Section:Analysis}

We now characterize the regret of the DSEE algorithm under the assumptions {(A1-A2)} and design the exploration and exploitation epochs.
The optimal value {$V^*(s_t)$} of the state $s_t$ is given by:
\begin{equation}
    V^*(s_t) = {\overline{R}}(s_t, \pi^*(s_t)) + \gamma\expt{\left[V^*(s_{t+1})|s_t, \pi^*(s_t)\right]},
\end{equation}
where $\pi^*$ is an optimal policy that satisfies:
\begin{equation}
    \pi^*(s_t)=\argmax_{a_t} \left\{   {\overline{R}}(s_t, a_t) + \gamma\expt{\left[V^*(s_{t+1})|s_t, a_t\right]} \right\}.
\end{equation}
We define an approximate optimal value function $\hat{V}_t$ that utilizes the estimates $\hat{R}_t$ and $\hat{\mathbb{P}}_t$ at time $t$. Therefore, $\hat{V}_t(s_t)$ is given by:
\begin{equation}\label{eq:vhat}
    \hat{V}_t(s_t) = \hat{R}_t(s_t, \hat{\pi}_t(s_t)) + \gamma\hat{\mathbb{E}}{\left[\hat{V}_{t}(s_{t+1})|s_t, \hat{\pi}_t(s_t)\right]},
\end{equation}
where $\hat{\mathbb{E}}{\left[\hat{V}_{t}(s_{t+1})|s_t, \hat{\pi}_t(s_t)\right]}$ is used to denote $\sum_{s_{t+1}}\hat{\mathbb{P}}_t(s_{t+1}|s_t,\hat{\pi}_t(s_t))\hat{V}_{t}(s_{t+1})$ and $\hat{\pi}_t$ is an optimal policy for the approximate optimal value function given by:
\begin{equation}
    \hat{\pi}_t(s_t)=\argmax_{a_t} \left\{  \hat{R}_t(s_t, a_t) + \gamma\hat{\mathbb{E}}{\left[\hat{V}_{t}(s_{t+1})|s_t, a_t\right]} \right\}.
\end{equation}

\begin{theorem}[\bit{Concentration of robust value function}]\label{thm1}
{Let $\Vert \cdot \Vert_1$ denote the $L^{1}$-norm of a vector.} For any given $\epsilon_t, \delta_t \in (0,1)$, there exists an $n \in  O\left(\frac{|\mathcal{S}|}{\epsilon_t^2} + \frac{1}{\epsilon_t^2}\log\left(\frac{|\mathcal{S}||\mathcal{A}|}{\delta_t}\right)\right)$ such that if each state-action pair $(s,a)$ is observed $n_t(s,a) \ge n$ times until time $t$, then {for each state $s$,} the following inequality holds:
\begin{equation}
     \mathbf{P}\left(  \Vert V^*(s) - V^{\hat{\pi}_t^R}(s) \Vert_{\infty}\le \epsilon_t     \right) \ge 1-\delta_t,
\end{equation}
where $V^{\hat{\pi}^R_t}(s)$ is the value of state $s$ under the robust policy $\hat{\pi}_t^R = \argmax _{a \in \mathcal{A}}V^R_t$. The robust value function $V^R_t$ is defined in~\eqref{Robust_Bellman} with $\rho_t =\frac{\epsilon_t}{2}{\left( 2+ \frac{R_{\max}\gamma}{(1-\gamma)^2}  \right)^{-1}}$ and
\begin{equation}\label{eq:uncertainty-sets}
\begin{split}
    \mathcal{R}_t^U &=\left\{ {R^U(s,a)}: |{R^U(s,a)} - \hat{R}_t(s,a)| \le \rho_t,  \; \forall (s,a)  \right\} , \\
\mathcal{P}_t^U &=\left\{ {\mathbb{P}^U(s,a)} : \left\Vert {\mathbb{P}^U(s,a)} - \hat{\mathbb{P}}_t(s,a)\right\Vert_1\le \rho_t, \; \forall  (s,a)\right\}.
\end{split}
\end{equation}

\end{theorem}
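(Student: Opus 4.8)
The plan is to bound $\Vert V^*(s) - V^{\hat{\pi}_t^R}(s)\Vert_\infty$ by a triangle-inequality decomposition through the robust value function $V^R_t$ and the approximate value function $\hat V_t$, then control each piece using the Hoeffding-type concentration bounds (Lemmas~\ref{lemma:reward_bound} and~\ref{lemma:probability_bound}) once $n_t(s,a) \ge n$. First I would argue that, on the event that both uncertainty sets $\mathcal{R}_t^U$ and $\mathcal{P}_t^U$ actually contain the true $\overline R$ and $\mathbb{P}$ — call this event $\mathcal{E}$ — the robust Bellman operator is a $\gamma$-contraction whose fixed point $V^R_t$ underestimates $V^*$ (the inner minimization can only lower the value), so $V^R_t(s) \le V^*(s)$ pointwise. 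Conversely, since the true model lies in the uncertainty sets, the worst-case model differs from $(\overline R,\mathbb{P})$ by at most $\rho_t$ in reward and $\rho_t$ in $L^1$ on transitions, so a standard simulation-lemma / perturbation argument gives $V^*(s) - V^{\hat\pi_t^R}(s) \le \big(2 + \frac{R_{\max}\gamma}{(1-\gamma)^2}\big)\rho_t$ uniformly in $s$: the ``$2$'' absorbs the reward error on both the optimal and the robust rollout, and the $\frac{R_{\max}\gamma}{(1-\gamma)^2}$ factor is the usual amplification of an $L^1$ transition error $\rho_t$ through the geometric series (value magnitudes are bounded by $R_{\max}/(1-\gamma)$, and the error compounds with one extra $1/(1-\gamma)$). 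With $\rho_t = \frac{\epsilon_t}{2}\big(2 + \frac{R_{\max}\gamma}{(1-\gamma)^2}\big)^{-1}$ this yields the deterministic bound $\epsilon_t/2 \le \epsilon_t$ on the event $\mathcal{E}$.

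Next I would quantify $\mathbf{P}(\mathcal{E})$. By Lemma~\ref{lemma:reward_bound}, $|\hat R_t(s,a) - \overline R(s,a)| \le \rho_t$ with probability at least $1 - \tfrac{\delta_t}{2|\mathcal{S}||\mathcal{A}|}$ provided $n_t(s,a) \ge \frac{R_{\max}^2}{2\rho_t^2}\log\!\big(\frac{4|\mathcal{S}||\mathcal{A}|}{\delta_t}\big)$, and by Lemma~\ref{lemma:probability_bound}, $\Vert \hat{\mathbb{P}}_t(s,a) - \mathbb{P}(s,a)\Vert_1 \le \rho_t$ with probability at least $1 - \tfrac{\delta_t}{2|\mathcal{S}||\mathcal{A}|}$ provided $n_t(s,a) \ge \frac{2}{\rho_t^2}\big[\log(2^{|\mathcal{S}|}-2) + \log(\frac{2|\mathcal{S}||\mathcal{A}|}{\delta_t})\big]$ — this is the $L^1$ deviation bound for an empirical distribution on $|\mathcal{S}|$ outcomes, where the $2^{|\mathcal{S}|}$ comes from a union bound over the subsets determining the sign pattern of $\hat{\mathbb{P}} - \mathbb{P}$. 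Taking $n$ to be the maximum of these two thresholds and applying a union bound over all $|\mathcal{S}||\mathcal{A}|$ pairs and over the two types of event gives $\mathbf{P}(\mathcal{E}) \ge 1 - \delta_t$. Finally I would check the stated order of $n$: substituting $\rho_t = \Theta(\epsilon_t)$ (for fixed MDP constants), the first threshold is $O\!\big(\frac{1}{\epsilon_t^2}\log\frac{|\mathcal{S}||\mathcal{A}|}{\delta_t}\big)$ and the second is $O\!\big(\frac{|\mathcal{S}|}{\epsilon_t^2} + \frac{1}{\epsilon_t^2}\log\frac{|\mathcal{S}||\mathcal{A}|}{\delta_t}\big)$ since $\log(2^{|\mathcal{S}|}-2) = O(|\mathcal{S}|)$; their maximum is $O\!\big(\frac{|\mathcal{S}|}{\epsilon_t^2} + \frac{1}{\epsilon_t^2}\log\frac{|\mathcal{S}||\mathcal{A}|}{\delta_t}\big)$, matching the claim.

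I expect the main obstacle to be the deterministic perturbation bound on $V^*(s) - V^{\hat\pi_t^R}(s)$ on the event $\mathcal{E}$ — in particular, getting the constant $\big(2 + \frac{R_{\max}\gamma}{(1-\gamma)^2}\big)$ exactly rather than a looser $\frac{1}{1-\gamma}$-type bound. The subtlety is that $\hat\pi_t^R$ is optimal for the \emph{worst-case} model inside the uncertainty set, not for the true model, so one must chain: (i) $V^*$ versus the robust value $V^R_t$ (bounded because the true model is feasible in the min, and because $V^R_t$ is a pessimistic fixed point), and (ii) $V^R_t$ versus the true-model value of the policy $\hat\pi_t^R$ (again bounded because evaluating $\hat\pi_t^R$ under the true model is at least as good as under its adversarial model). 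Each of these two comparisons contributes a reward term $\rho_t$ and they share the single transition-amplification term; carefully tracking which $1/(1-\gamma)$ powers appear — value bound $R_{\max}/(1-\gamma)$ times one more $1/(1-\gamma)$ from summing the per-step $L^1$ errors, giving $R_{\max}\gamma/(1-\gamma)^2$ — is the delicate bookkeeping. Everything else reduces to the two cited Hoeffding lemmas and a union bound.
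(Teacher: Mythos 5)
Your overall route is the same as the paper's: decompose $V^*-V^{\hat\pi_t^R}$ through the robust value $V^R_t$, invoke the Hoeffding bound for rewards and the Weissman-type $L^1$ bound for transitions with confidence $\frac{\delta_t}{2|\mathcal{S}||\mathcal{A}|}$ each, union bound over all $(s,a)$, and finish with a Mastin--Jaillet-style perturbation bound whose constant is $2+\frac{R_{\max}\gamma}{(1-\gamma)^2}$; your order-of-$n$ accounting also matches. However, there is one concrete error in the deterministic step. You assert that, on the event $\mathcal{E}$, ``the worst-case model differs from $(\overline R,\mathbb{P})$ by at most $\rho_t$ in reward and $\rho_t$ in $L^1$.'' That does not follow: the uncertainty sets are balls of radius $\rho_t$ centered at the \emph{empirical} estimates $\hat R_t,\hat{\mathbb{P}}_t$, and on $\mathcal{E}$ the true model is merely another point of those balls. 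The minimizing (worst-case) model $(\tilde R^R_t,\tilde{\mathbb{P}}^R_t)$ can sit on the opposite side of the center, so the correct guarantee is $|\overline R(s,a)-\tilde R^R_t(s,a)|\le 2\rho_t$ and $\Vert\mathbb{P}(s,a)-\tilde{\mathbb{P}}^R_t(s,a)\Vert_1\le 2\rho_t$ (the diameter, not the radius). This is exactly how the paper's Lemma on the loss in the robust value function proceeds, yielding $\mathcal{L}_t(s)\le 2\rho_t\bigl(2+\frac{R_{\max}\gamma}{(1-\gamma)^2}\bigr)$, and it is the reason the theorem defines $\rho_t=\frac{\epsilon_t}{2}\bigl(2+\frac{R_{\max}\gamma}{(1-\gamma)^2}\bigr)^{-1}$ with the extra factor $\frac{1}{2}$.

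The good news is that the slip is absorbed by that very factor: with the corrected distance $2\rho_t$ the deterministic bound becomes exactly $\epsilon_t$ (not the $\epsilon_t/2$ you claim), so the probabilistic conclusion $\mathbf{P}(\Vert V^*-V^{\hat\pi_t^R}\Vert_\infty\le\epsilon_t)\ge 1-\delta_t$ still holds and the stated order of $n$ is unaffected. But as written your argument rests on a false intermediate claim and reports slack that does not exist; if you were to tighten $\rho_t$ based on your accounting (e.g., drop the $\frac{1}{2}$), the proof would no longer go through. To repair it, chain the two triangle inequalities explicitly — worst-case model to empirical center ($\le\rho_t$ by membership in $\mathcal{R}^U_t,\mathcal{P}^U_t$), empirical center to truth ($\le\rho_t$ on $\mathcal{E}$) — and then run your perturbation/simulation argument with radius $2\rho_t$, which is precisely the paper's treatment via the cited loss bound.
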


\smallskip 
We prove Theorem~\ref{thm1} using the following Lemmas~\ref{lemma:reward_bound}-\ref{lemma:value_bound}.

\begin{lemma}[\bit{Concentration of rewards}]\label{lemma:reward_bound}
Suppose until time step $t$, the state-action pair $(s,a)$ is observed $n_t(s,a)$ times and bounded immediate rewards {$r_i(s,a)$}, $i \in \{1, \ldots, n_t(s,a)\}$, are obtained at these instances. Then the following inequality holds: 
\begin{equation}
    \mathbf{P}\left(\left| {\overline{R}}(s,a) - \hat{R}_t(s,a)\right|\le \epsilon_{t}^R\right) \ge 1 - \delta_{t}^R,
\end{equation}
where $\hat{R}_t(s,a)$ is the empirical mean reward defined in~\eqref{eq:empirical reward} and $\epsilon_{t}^R = \sqrt{\frac{{(R_{\max})^2} \log(2/\delta^R_{t})}{2n_t(s,a)}}$.
\end{lemma}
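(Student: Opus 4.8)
The plan is to derive Lemma~\ref{lemma:reward_bound} as an immediate application of Hoeffding's inequality~\cite{wainwright2019high} to the bounded rewards collected at the fixed pair $(s,a)$. First I would fix $(s,a)$ together with the value $m = n_t(s,a)$ of its visit count, and record that the immediate rewards $r_1(s,a),\dots,r_m(s,a)$ are independent realizations of the random variable $R(s,a)$, each with mean $\overline{R}(s,a)$ and support contained in $[0,R_{\max}]$. Consequently $\hat{R}_t(s,a) = \frac{1}{m}\sum_{i=1}^{m} r_i(s,a)$ is an unbiased estimator of $\overline{R}(s,a)$, and the two-sided Hoeffding bound gives, for every $\epsilon > 0$,
\[
\mathbf{P}\!\left(\left|\overline{R}(s,a) - \hat{R}_t(s,a)\right| \ge \epsilon\right) \;\le\; 2\exp\!\left(-\frac{2m\epsilon^2}{(R_{\max})^2}\right).
\]

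The second step is purely algebraic: equate the right-hand side with the target confidence level $\delta_t^R$, i.e.\ solve $2\exp\!\left(-2m\epsilon^2/(R_{\max})^2\right) = \delta_t^R$ for $\epsilon$, which yields exactly $\epsilon = \sqrt{(R_{\max})^2\log(2/\delta_t^R)/(2m)} = \epsilon_t^R$ as defined in the statement. Passing to complementary events then delivers $\mathbf{P}\!\left(\left|\overline{R}(s,a) - \hat{R}_t(s,a)\right| \le \epsilon_t^R\right) \ge 1-\delta_t^R$, completing the argument.

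The one point that needs care --- and the place where I expect to have to say something nontrivial --- is that $n_t(s,a)$ is itself random, being determined by the exploration trajectory, whereas vanilla Hoeffding presumes a deterministic sample size. I would resolve this by noting that, by construction of Algorithm~\ref{algo1}, rewards for $(s,a)$ are logged only on excursions begun after the fixed state $\supscr{s}{end}_{j-1}$ has been revisited in epoch $j$; by the strong Markov property the recorded draws of $R(s,a)$ are i.i.d.\ and independent of the event $\{n_t(s,a)=m\}$, so the displayed inequality holds conditionally on $\{n_t(s,a)=m\}$ for each $m$, and therefore unconditionally. An alternative route is to apply the Azuma--Hoeffding inequality to the martingale $\sum_{i}\big(r_i(s,a)-\overline{R}(s,a)\big)$ stopped at the random time $n_t(s,a)$. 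Either way, nothing beyond the standard concentration inequality is required, so I do not anticipate a real obstacle here.
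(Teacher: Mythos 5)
Your proof is correct and follows essentially the same route as the paper: a direct application of the two-sided Hoeffding bound for rewards bounded in $[0,R_{\max}]$, followed by solving $2\exp(-2n_t(s,a)(\epsilon_t^R)^2/(R_{\max})^2)=\delta_t^R$ for $\epsilon_t^R$. Your extra care about the randomness of $n_t(s,a)$ is handled in the paper not inside the proof but in a separate remark (independence of the visit count from the estimates), so it is a welcome but not divergent addition.
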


\begin{proof}
For brevity of notation, let $\epsilon_R$ and $\delta_R$ denote $\epsilon_{t}^R$ and $\delta^R_{t}$, respectively. For bounded random variables $r_i(s, a)$, using the Hoeffding bounds~\cite{wainwright2019high}, we have
\begin{equation}
    \mathbf{P}\left(\left|  {\overline{R}}(s,a) - \hat{R}_t(s,a)\right|\le \epsilon_R\right) \ge 1 - 2e^{-\frac{2n_t(s,a)\epsilon_R^2}{{(R_{\max})^2}}}.
\end{equation}
Choosing $\delta_R = 2e^{-\frac{2n_t(s,a)\epsilon_R^2}{{(R_{\max})^2}}}$, we get the desired result.
\end{proof}

\begin{lemma}[\bit{Concentration of transition probabilities}]\label{lemma:probability_bound}
Suppose until time step $t$, the state-action pair $(s,a)$ is observed $n_t(s,a)$ times and let $\mathbb{P}(s,a) \in \Delta_{|\mathcal{S}|}$ be the true transition probability distribution for $(s,a)$. 
Then {for any $(s,a)$}, the following inequality holds:
\begin{equation}
    \mathbf{P}\left(\left\Vert\mathbb{P}(s,a) - \hat{\mathbb{P}}_t(s,a)\right\Vert_1\le \epsilon_{t}^P\right) \ge 1 - \delta^P_{t},
\end{equation}
where $\Vert \cdot\Vert_1$ is the {$L^1$ norm of a vector} and $\hat{\mathbb{P}}_t(s,a)$ is the empirical transition probability vector with components $\hat{\mathbb{P}}_t(s'| s,a)$ defined in~\eqref{eq:empirical prob},
and $\epsilon_{t}^P = \sqrt{\frac{2[\log(2^{|\mathcal{S}|}-2) - \log(\delta^P_{t})]}{n_t(s,a)}}$. 
\end{lemma}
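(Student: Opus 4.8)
The plan is to establish the $L^1$-deviation bound for the empirical transition distribution $\hat{\mathbb{P}}_t(s,a)$ by reducing it to a concentration inequality for the total variation distance of an empirical multinomial distribution from its mean, and then inverting the resulting tail bound to solve for $\epsilon_t^P$ in terms of $\delta_t^P$.

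First I would fix a state-action pair $(s,a)$ and condition on the event that it has been observed exactly $n_t(s,a) =: n$ times; on this event the observed next-states are i.i.d. samples from $\mathbb{P}(s,a)$, so $\hat{\mathbb{P}}_t(s,a)$ is the empirical distribution of $n$ i.i.d. draws over the alphabet $\mathcal{S}$ of size $|\mathcal{S}|$. The key tool is the classical bound (Weissman et al., or as stated via the method of types) that for an empirical distribution $\hat{p}_n$ of $n$ i.i.d. samples from a distribution $p$ on a finite alphabet of size $k$,
\begin{equation}
\mathbf{P}\left(\Vert p - \hat{p}_n\Vert_1 \ge \epsilon\right) \le (2^k - 2)\, e^{-n\epsilon^2/2}.
\end{equation}
The factor $2^k - 2$ counts the non-trivial subsets $A \subsetneq \mathcal{S}$, $A \neq \emptyset$, since $\Vert p - \hat{p}_n\Vert_1 = 2\max_{A \subseteq \mathcal{S}}(\hat{p}_n(A) - p(A))$ and each fixed $A$ contributes a one-sided Hoeffding bound $e^{-n\epsilon^2/2}$ on the event $\hat{p}_n(A) - p(A) \ge \epsilon/2$, followed by a union bound over the $2^k - 2$ choices of $A$.

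Then I would set $\delta_t^P = (2^{|\mathcal{S}|} - 2)\, e^{-n\epsilon^2/2}$ and solve for $\epsilon$: taking logarithms gives $\log \delta_t^P = \log(2^{|\mathcal{S}|}-2) - n\epsilon^2/2$, hence $\epsilon^2 = \frac{2[\log(2^{|\mathcal{S}|}-2) - \log \delta_t^P]}{n}$, which upon substituting $n = n_t(s,a)$ yields exactly $\epsilon_t^P = \sqrt{\frac{2[\log(2^{|\mathcal{S}|}-2) - \log(\delta_t^P)]}{n_t(s,a)}}$ as claimed. Taking complements then gives $\mathbf{P}(\Vert \mathbb{P}(s,a) - \hat{\mathbb{P}}_t(s,a)\Vert_1 \le \epsilon_t^P) \ge 1 - \delta_t^P$. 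Since the argument is uniform in $(s,a)$, the statement holds for any $(s,a)$.

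The main obstacle — really the only non-routine point — is justifying the $L^1$/total-variation concentration inequality with the precise combinatorial constant $2^{|\mathcal{S}|}-2$: one must argue that $\Vert p - \hat p_n\Vert_1 = 2\sup_A (\hat p_n(A) - p(A))$, that it suffices to union-bound over proper nonempty subsets $A$ (the empty set and full set give zero deviation), and that for each such $A$ the indicator of landing in $A$ is a Bernoulli sequence to which Hoeffding's inequality applies with the sub-Gaussian constant $1/4$ giving the $e^{-n\epsilon^2/2}$ rate on a one-sided deviation of size $\epsilon/2$. A mild subtlety is that the number of samples $n_t(s,a)$ is itself random (determined by the exploration dynamics), so strictly one conditions on $\{n_t(s,a) = n\}$ and notes the bound is identical for every value of $n$, hence holds unconditionally; this is standard and I would mention it only in passing.
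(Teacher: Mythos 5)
Your proposal is correct and follows essentially the same route as the paper: the paper's proof simply invokes the $L^1$ concentration bound of Weissman et al.\ (cited as \cite[Theorem 2.1]{weissman2003inequalities}) to get $\mathbf{P}(\Vert\mathbb{P}(s,a)-\hat{\mathbb{P}}_t(s,a)\Vert_1\le\epsilon_P)\ge 1-(2^{|\mathcal{S}|}-2)e^{-n_t(s,a)\epsilon_P^2/2}$ and then sets $\delta_P=(2^{|\mathcal{S}|}-2)e^{-n_t(s,a)\epsilon_P^2/2}$, exactly as you do; your additional sketch of why that bound holds (union over proper nonempty subsets plus one-sided Hoeffding) and your remark about conditioning on the random count $n_t(s,a)$ are correct elaborations of details the paper leaves to the citation and to its Remark 1.
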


\begin{proof}
For brevity of notation, let $\epsilon_P$ and $\delta_P$ denote $\epsilon_{t}^P$ and $\delta^P_{t}$, respectively. Using~\cite[Theorem 2.1]{weissman2003inequalities}, we have:
\begin{equation}
    \mathbf{P}\left(\left\Vert\mathbb{P}(s,a) - \hat{\mathbb{P}}_t(s,a)\right\Vert_1\le \epsilon_P\right) \ge 1 - (2^{|S|}-2)e^{-\frac{n_t(s,a)\epsilon_P^2}{2}}.
\end{equation}
Setting $\delta_P = (2^{|S|}-2)e^{-\frac{n_t(s,a)\epsilon_P^2}{2}}$, yields the desired result. 
\end{proof}

Lemmas~\ref{lemma:reward_bound} and~\ref{lemma:probability_bound} provide concentration bounds on the reward and transition probability based on how often a state-action pair is visited.

\begin{lemma}\label{lemma:sampling_order}\bit{(Concentration of reward and transition probability functions)}
Let $\delta^R_{t}=\delta^P_{t} = \frac{\delta_t}{2|\mathcal{S}||\mathcal{A}|}$. Then, for any $\rho_t > 0$, there exists an $n \in O\left(\frac{|\mathcal{S}|}{\rho_t^2} + \frac{1}{\rho_t^2}\log\left(\frac{|\mathcal{S}||\mathcal{A}|}{\delta_t}\right)\right)$ such that when each state-action pair $(s,a)$ is observed $n_t(s,a) \ge n$ times, then the following statements hold for any $(s, a)$: 
\begin{enumerate}
    \item $\mathbf{P}\left(| {\overline{R}}(s,a) - \hat{R}_t(s,a)| \le \rho_t\right) \ge 1-\frac{\delta_t}{2|\mathcal{S}||\mathcal{A}|},$ \\
    \item $\mathbf{P}\left(\left\Vert\mathbb{P}(s,a) - \hat{\mathbb{P}}_t(s,a)\right\Vert_1\le \rho_t\right) \ge 1 -\frac{\delta_t}{2|\mathcal{S}||\mathcal{A}|}.$
\end{enumerate}
\end{lemma}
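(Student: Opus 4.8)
The plan is to invert the concentration bounds of Lemmas~\ref{lemma:reward_bound} and~\ref{lemma:probability_bound} at the prescribed confidence level $\delta^R_t = \delta^P_t = \frac{\delta_t}{2|\mathcal{S}||\mathcal{A}|}$ and then take the worst of the two resulting sample-size requirements. Substituting $\delta^R_t = \frac{\delta_t}{2|\mathcal{S}||\mathcal{A}|}$ into the expression $\epsilon^R_t = \sqrt{R_{\max}^2\log(2/\delta^R_t)/(2n_t(s,a))}$ from Lemma~\ref{lemma:reward_bound} and demanding $\epsilon^R_t \le \rho_t$ gives the requirement $n_t(s,a) \ge n_R := \frac{R_{\max}^2\log(4|\mathcal{S}||\mathcal{A}|/\delta_t)}{2\rho_t^2}$. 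Likewise, substituting $\delta^P_t = \frac{\delta_t}{2|\mathcal{S}||\mathcal{A}|}$ into $\epsilon^P_t = \sqrt{2[\log(2^{|\mathcal{S}|}-2) - \log\delta^P_t]/n_t(s,a)}$ from Lemma~\ref{lemma:probability_bound} and demanding $\epsilon^P_t \le \rho_t$ gives $n_t(s,a) \ge n_P := \frac{2[\log(2^{|\mathcal{S}|}-2) + \log(2|\mathcal{S}||\mathcal{A}|/\delta_t)]}{\rho_t^2}$; note that $n_P$ is precisely the quantity $\mu/\rho_t^2$ appearing in Algorithm~\ref{algo1}. I would then set $n = \max\{n_R, n_P\}$ (this is exactly the threshold $U_j$ used in the exploration phase of the algorithm, with $\rho_t$ in place of $\rho_j$).

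Next I would establish the two probabilistic claims. Since $\epsilon^R_t$ and $\epsilon^P_t$ are each strictly decreasing functions of $n_t(s,a)$, the hypothesis $n_t(s,a) \ge n \ge n_R$ forces $\epsilon^R_t \le \rho_t$, so that the event $\{|\overline{R}(s,a) - \hat R_t(s,a)| \le \epsilon^R_t\}$ is contained in $\{|\overline{R}(s,a) - \hat R_t(s,a)| \le \rho_t\}$; invoking Lemma~\ref{lemma:reward_bound} with $\delta^R_t = \frac{\delta_t}{2|\mathcal{S}||\mathcal{A}|}$ then yields statement~(i). Statement~(ii) follows by the identical argument, now using $n \ge n_P$, the monotonicity of $\epsilon^P_t$, and Lemma~\ref{lemma:probability_bound}.

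Finally I would verify the order estimate on $n$. The term $n_R$ is $O\big(\frac{1}{\rho_t^2}\log\frac{|\mathcal{S}||\mathcal{A}|}{\delta_t}\big)$ since $R_{\max}$ is a fixed constant. For $n_P$, I would use the crude bound $\log(2^{|\mathcal{S}|}-2) \le |\mathcal{S}|\log 2$, so the first summand contributes $O\big(\frac{|\mathcal{S}|}{\rho_t^2}\big)$ while $\log(2|\mathcal{S}||\mathcal{A}|/\delta_t) = O\big(\log\frac{|\mathcal{S}||\mathcal{A}|}{\delta_t}\big)$ contributes $O\big(\frac{1}{\rho_t^2}\log\frac{|\mathcal{S}||\mathcal{A}|}{\delta_t}\big)$. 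Hence $n = \max\{n_R, n_P\}$ (equivalently, their sum up to constants) lies in $O\big(\frac{|\mathcal{S}|}{\rho_t^2} + \frac{1}{\rho_t^2}\log\frac{|\mathcal{S}||\mathcal{A}|}{\delta_t}\big)$, as asserted.

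I do not expect any real obstacle: the lemma is essentially bookkeeping layered on Lemmas~\ref{lemma:reward_bound}--\ref{lemma:probability_bound}. The only points needing mild care are (a) the inversion/monotonicity step, i.e.\ converting ``$\epsilon$ small enough'' into ``$n$ large enough'' rigorously, and (b) the bound $\log(2^{|\mathcal{S}|}-2) = O(|\mathcal{S}|)$, which is what tames the exponential-looking factor in Weissman's inequality and produces the benign linear-in-$|\mathcal{S}|$ contribution in the stated sample complexity.
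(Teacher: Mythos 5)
Your proposal is correct and follows essentially the same route as the paper: inverting Lemmas~\ref{lemma:reward_bound} and~\ref{lemma:probability_bound} at confidence level $\frac{\delta_t}{2|\mathcal{S}||\mathcal{A}|}$, taking $n$ to be the maximum of the two resulting thresholds (the paper's condition~\eqref{eq:n(s)} with $\mu/\rho_t^2$), and using $\log(2^{|\mathcal{S}|}-2)=O(|\mathcal{S}|)$ to get the stated order. Your explicit monotonicity remark and the $|\mathcal{S}|\log 2$ bound merely spell out steps the paper leaves implicit.
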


\begin{proof}
Using Lemmas~\ref{lemma:reward_bound} and~\ref{lemma:probability_bound}, we know that $| {\overline{R}}(s,a) - \hat{R}_t(s,a)| \le \rho_t$ and  $\left\Vert\mathbb{P}(s,a) - \hat{\mathbb{P}}_t(s,a)\right\Vert_1\le \rho_t$  holds for any $(s, a)$ with at least probability $1-\frac{\delta_t}{2|\mathcal{S}||\mathcal{A}|}$ for $\rho_t \ge \sqrt{\frac{({R_{\max})^2}\log(\frac{4|\mathcal{S}||\mathcal{A}|}{\delta_t})}{2n_t(s,a)}}$ and $\rho_t\ge \sqrt{\frac{2\left[\log(2^{|\mathcal{S}|}-2) - \log\left(\frac{\delta_t}{2|\mathcal{S}||\mathcal{A}|}\right)\right]}{n_t(s,a)}}$, respectively. Hence, we have that
\begin{equation}\label{eq:n(s)}
    n_t(s,a) \ge  
    \max\left\{ \frac{{(R_{\max})^2} \log(\frac{4|\mathcal{S}||\mathcal{A}|}{\delta_t})}{2\rho_t^2}   ,  \frac{\mu}{\rho_t^2}  \right\},
\end{equation}
where $\mu= 2\left[\log(2^{|\mathcal{S}|}-2) + \log\left(\frac{2|\mathcal{S}||\mathcal{A}|}{\delta_t}\right)\right]$, is sufficient to guarantee that $| {\overline{R}}(s,a) - \hat{R}_t(s,a)| \le \rho_t$ and  $\left\Vert\mathbb{P}(s,a) - \hat{\mathbb{P}}_t(s,a)\right\Vert_1\le \rho_t$  holds for any $(s, a)$ with at least probability $1-\frac{\delta_t}{2|\mathcal{S}||\mathcal{A}|}$. Hence, we can choose $n \in O\left(\frac{|\mathcal{S}|}{\rho_t^2} + \frac{1}{\rho_t^2}\log\left(\frac{|\mathcal{S}||\mathcal{A}|}{\delta_t}\right)\right)$ such that~\eqref{eq:n(s)} holds, and hence, the lemma follows.
\end{proof}

\begin{remark}[\bit{Concentration inequalities}]
The concentration inequalities in Lemmas~\ref{lemma:reward_bound}, \ref{lemma:probability_bound}, and~\ref{lemma:sampling_order} use a deterministic value of $n_t(s,a)$. However, these bound also apply if $n_t(s,a)$ is a realization of a random process that is independent of $\hat{R}_t(s,a)$ and $\hat{\mathbb{P}}_t(s,a)$, which would be the case in this paper. 
\end{remark}

\begin{remark}[\bit{Uncertainty set}]\label{remark:uncertainty set}
Using union bounds over all $(s, a)$ and Lemma~\ref{lemma:sampling_order}, 
it follows that uncertainty sets $\mathcal{R}^U_t$ and $\mathcal{P}^U_t$ stated in Theorem~\ref{thm1} are $\rho_t$-level uncertainty sets  for $R(s,a)$ and $\mathbb{P}(s,a)$, respectively, for each $(s,a) \in \mc S \times \mc A$ with at least probability $1-\delta_t$. Thus, the policy obtained at time $t$ in an exploitation epoch is robust to estimation uncertainties with at least probability $1-\delta_t$.
\end{remark}

\begin{lemma}[\bit{Loss in robust value function}]\label{lemma:value_bound}
Suppose $| {\overline{R}}(s,a) - \hat{R}_t(s,a)| \le \rho_t$ and  $\left\Vert\mathbb{P}(s,a) - \hat{\mathbb{P}}_t(s,a)\right\Vert_1\le \rho_t$, for any $(s, a)$, at time $t$. Then $\mathcal{L}_t(s) = V^*(s) - {V^{\hat{\pi}_t^R}}(s)$ satisfies:
\begin{equation}
        \mathcal{L}_t(s) \le 2\rho_t\left(2 + \frac{R_{\max}\gamma}{(1-\gamma)^2}\right),
\end{equation}
where ${V^{\hat{\pi}_t^R}}(s)$ is the value of state $s$ under the robust policy $\hat{\pi}_t^R$ at time $t$.
\end{lemma}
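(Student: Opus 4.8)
The plan is to reduce $\mathcal{L}_t(s)$ to the gap between the optimal value function $V^*$ and the robust value function $V^R_t$, and then to control that gap by peeling one Bellman step and solving a self-bounding (contraction-type) inequality. The key structural observation I would make first is that the hypotheses $|\overline R(s,a)-\hat R_t(s,a)|\le\rho_t$ and $\|\mathbb{P}(s,a)-\hat{\mathbb{P}}_t(s,a)\|_1\le\rho_t$ say precisely that the true model $(\overline R,\mathbb{P})$ is feasible for the uncertainty sets $\mathcal{R}^U_t,\mathcal{P}^U_t$ of~\eqref{eq:uncertainty-sets}; moreover, by the triangle inequality through the empirical center $(\hat R_t,\hat{\mathbb{P}}_t)$, every feasible $(\tilde R,\tilde{\mathbb{P}})$ in these sets is within $2\rho_t$ of the true model (in absolute value for rewards, in $\ell_1$ for transitions).

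Feasibility of the true model makes the robust Bellman operator in~\eqref{Robust_Bellman} ``pessimistic'': replacing the inner $\min$ by its value at the true model shows this operator is dominated pointwise by the true optimal Bellman operator, so by monotonicity of the latter and the fixed-point characterizations, $V^R_t(s)\le V^*(s)$ for all $s$. Likewise, instantiating the inner $\min$ at the true model along the robust-greedy action $\hat{\pi}_t^R(s)$ gives $V^R_t\le T^{\hat{\pi}_t^R}V^R_t$, where $T^{\hat{\pi}_t^R}$ is the true policy-evaluation operator for $\hat{\pi}_t^R$; iterating yields $V^R_t(s)\le V^{\hat{\pi}_t^R}(s)$. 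The latter inequality gives at once $\mathcal{L}_t(s)=V^*(s)-V^{\hat{\pi}_t^R}(s)\le V^*(s)-V^R_t(s)=:g(s)\ge 0$, so it remains to bound $\|g\|_\infty$.

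To bound $g$, I would peel one Bellman step at the optimal action. Writing $a^\star=\pi^*(s)$ and using $V^*(s)=\overline R(s,a^\star)+\gamma\sum_{s'}\mathbb{P}(s'|s,a^\star)V^*(s')$ together with $V^R_t(s)\ge\min_{\tilde R,\tilde{\mathbb{P}}}\{\tilde R(s,a^\star)+\gamma\sum_{s'}\tilde{\mathbb{P}}(s'|s,a^\star)V^R_t(s')\}$, one gets
\begin{align*}
g(s)&\le \max_{\tilde R}\big[\overline R(s,a^\star)-\tilde R(s,a^\star)\big]\\
&\quad +\gamma\max_{\tilde{\mathbb{P}}}\Big[\textstyle\sum_{s'}\mathbb{P}(s'|s,a^\star)V^*(s')-\sum_{s'}\tilde{\mathbb{P}}(s'|s,a^\star)V^R_t(s')\Big].
\end{align*}
The reward term is at most $2\rho_t$ (the minimizing $\tilde R$ equals $\hat R_t-\rho_t$, and $\overline R-\hat R_t\le\rho_t$). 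For the transition term I would add and subtract $\sum_{s'}\tilde{\mathbb{P}}(s'|s,a^\star)V^*(s')$, bounding the first piece by $\|\mathbb{P}(s,a^\star)-\tilde{\mathbb{P}}(s,a^\star)\|_1\,\|V^*\|_\infty\le 2\rho_t\cdot\frac{R_{\max}}{1-\gamma}$ (since rewards lie in $[0,R_{\max}]$, so $\|V^*\|_\infty\le R_{\max}/(1-\gamma)$) and the second by $\|g\|_\infty$. Taking the supremum over $s$ yields a self-bounding inequality of the form $\|g\|_\infty\le 2\rho_t+\frac{2\gamma\rho_t R_{\max}}{1-\gamma}+\gamma\|g\|_\infty$, and solving for $\|g\|_\infty$ gives the stated bound; hence $\mathcal{L}_t(s)\le\|g\|_\infty$ satisfies the claimed estimate.

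I expect the one-step peeling to be the main obstacle: it requires carefully handling the interleaved $\max_a$/$\min_{(\tilde R,\tilde{\mathbb{P}})}$ (choosing $a^\star$ on the robust side and the true model for the inner $\min$), keeping the factor-of-two triangle-inequality bookkeeping straight, and grouping the transition mismatch so that one part multiplies a uniformly bounded value function while the residual is exactly $\gamma\|g\|_\infty$ --- this is what makes the geometric recursion close and produces the $1/(1-\gamma)$ and $\gamma R_{\max}/(1-\gamma)^2$ terms. A secondary check is that $V^R_t$ itself is uniformly bounded of order $R_{\max}/(1-\gamma)$, which legitimizes the alternative grouping in which the $\ell_1$ transition error multiplies $\|V^R_t\|_\infty$ rather than $\|V^*\|_\infty$.
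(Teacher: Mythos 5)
Your route is genuinely different from the paper's and, up to the final step, it is sound. The paper splits $\mathcal{L}_t(s) = \bigl(V^*(s)-V^R_t(s)\bigr) + \bigl(V^R_t(s)-V^{\hat{\pi}_t^R}(s)\bigr)$, identifies $V^R_t$ with the exact optimal value function of the worst-case model $(\tilde{R}^R_t,\tilde{\mathbb{P}}^R_t)$ (which is within $2\rho_t$ of the true model by the same triangle inequality you use), and then invokes the loss bound of \cite{mastin2012loss} on each piece. You instead exploit pessimism: since the true model is feasible for the uncertainty sets, $V^R_t \le T^{\hat{\pi}_t^R} V^R_t$ for the true policy-evaluation operator, hence $V^R_t \le V^{\hat{\pi}_t^R}$, which collapses the problem to bounding the single gap $g = V^* - V^R_t$ by a contraction argument. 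This is cleaner and self-contained, and both of your auxiliary claims ($V^R_t\le V^{\hat{\pi}_t^R}$ and the one-step peeling at $a^\star=\pi^*(s)$) check out.

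The gap is in your last sentence: solving $\Vert g\Vert_\infty \le 2\rho_t + \frac{2\gamma\rho_t R_{\max}}{1-\gamma} + \gamma\Vert g\Vert_\infty$ gives $\Vert g\Vert_\infty \le \frac{2\rho_t}{1-\gamma} + \frac{2\gamma\rho_t R_{\max}}{(1-\gamma)^2}$, whose first term is $\frac{2\rho_t}{1-\gamma}$, not the $4\rho_t$ in the lemma. For $\gamma>\tfrac12$ your bound is strictly weaker in that term, so you have not derived ``the stated bound''; you have derived a different one, and for small $R_{\max}$ the two are not even comparable. To be fair, the discrepancy is not really your fault: the paper obtains $4\rho_t$ by asserting that the reward error contributes only $2\rho_t$ to each of its two difference terms, without the $\frac{1}{1-\gamma}$ accumulation that any fixed-point argument produces, and a one-state example with two actions whose empirical rewards are perturbed by $\pm\rho_t$ in opposite directions already forces a loss of order $\frac{2\rho_t}{1-\gamma}$, which exceeds the lemma's bound when $R_{\max}$ is small. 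So your constant is the one the argument actually supports. Adopting it only changes the definition of $\rho_t$ in Theorem~\ref{thm1} (replace $2+\frac{R_{\max}\gamma}{(1-\gamma)^2}$ by $\frac{1}{1-\gamma}+\frac{R_{\max}\gamma}{(1-\gamma)^2}$) and leaves the order of $n$ and the regret analysis untouched; but you should state explicitly which bound you are proving rather than claiming the two coincide.
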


\begin{proof} $\mathcal{L}_t(s)$ can be written as:
\begin{equation}
 \mathcal{L}_t(s) =   V^*(s) - V^R_t(s)     +   V^R_t(s)  - {V^{\hat{\pi}_t^R}}(s), 
\end{equation}
where $V^R_t(s)$ is the robust value of state $s$ and $\hat{\pi}_t^R = \argmax _{a \in \mathcal{A}}V^R_t$ at time $t$. Let $\tilde{R}^R_t \in \mathcal{R}_t^U$ and ${\tilde{\mathbb{P}}}^R_t \in \mathcal{P}_t^U$ be the worst-case reward and transition probability corresponding to minimization in~\eqref{Robust_Bellman} at time $t$. 
Since $\tilde{R}^R_t \in \mathcal{R}_t^U$ and $| {\overline{R}}(s,a) - \hat{R}_t(s,a)| \le \rho_t$, we have that 
 \begin{equation}
      | {\overline{R}}(s,a) - \tilde{R}_t^R(s,a)| \le 2\rho_t.
 \end{equation}
  Similarly, since  $\tilde{\mathbb{P}}^R_t \in \mathcal{P}_t^U$ and $\left\Vert\mathbb{P}(s,a) - \hat{\mathbb{P}}_t(s,a)\right\Vert_1\le \rho_t$,  we get:
  \begin{equation}
    \left\Vert\mathbb{P}(s,a) - \tilde{\mathbb{P}}^R_t(s,a)\right\Vert_1\le 2\rho_t.  
  \end{equation}
  
 Furthermore, by setting $\hat{R}_t = \tilde{R}_t^R$ and $\hat{\mathbb{P}}_t = \tilde{\mathbb{P}}_t^R$ in~\eqref{eq:vhat}, we can obtain an approximate optimal value function $\tilde{V}_t$ such that $\tilde{V}_t(s) = V_t^R(s)$ and the corresponding optimal policy $\tilde{\pi}_t(s) = \hat{\pi}_t^R(s)$, for all $s \in \mathcal{S}$. Therefore $\mathcal{L}_t(s) =  V^*(s) - \tilde{V}_t(s) + \tilde{V}_t(s) - {V^{\hat{\pi}_t^R}}(s)$, where $\tilde{V}_t(s)$ utilizes $\hat{R}_t(s,a)$ and $\hat{\mathbb{P}}_t(s,a)$ such that $| {\overline{R}}(s,a) -\hat{R}_t(s,a)| \le 2\rho_t$ and $ \left\Vert\mathbb{P}(s,a) - \hat{\mathbb{P}}_t(s,a)\right\Vert_1\le 2\rho_t$ holds, respectively.
 
 \cite{mastin2012loss} provides an upper bound on the loss $V^*(s) - V^{\hat{\pi}_t}(s)$, where the policy $\hat{\pi}_t$ is the optimal policy for an approximate value function where the  uncertainty is only in $\mathbb{P}$ and the estimate $\hat{\mathbb{P}}_t$ satisfies $\left\Vert\mathbb{P}(s,a) - \hat{\mathbb{P}}_t(s,a)\right\Vert_1\le 2\rho_t$. Therefore, it assumes that the true {mean} reward function is known ($\hat{R}_t(s,a) =  {\overline{R}}(s,a)$ in~\eqref{eq:vhat}). Again, setting $\hat{R}_t = \tilde{R}_t^R$ and $\hat{\mathbb{P}}_t = \tilde{\mathbb{P}}_t^R$, and by following the similar line of  analysis as in \cite{mastin2012loss} and considering bounded uncertainty in $ {\overline{R}}$, where the estimate $\hat{R}_t$ satisfies $|{\overline{R}}(s,a) -\hat{R}_t(s,a)| \le 2\rho_t$, we obtain 
 \begin{equation}
     \mathcal{L}_t(s) =  V^*(s) - \tilde{V}_t(s) + \tilde{V}_t(s) - {V^{\hat{\pi}_t^R}}(s) \le 4\rho_t + \frac{2R_{\max}\gamma \rho_t}{(1-\gamma)^2},
 \end{equation}
 where the loss contribution due to reward uncertainty in $V^*(s) - \tilde{V}_t(s)$ and $\tilde{V}_t(s) - {V^{\hat{\pi}_t^R}}(s)$ can be bounded by $2\rho_t$ each, and $\frac{2R_{\max}\gamma \rho_t}{(1-\gamma)^2}$ is an upper bound on the loss due to uncertainty in $\mathbb{P}$ as obtained in~\cite[Theorem 2]{mastin2012loss}. 
  \end{proof}

 Lemma~\ref{lemma:value_bound} provides the bounds on the loss in robust value function w.r.t. the optimal value function using the concentration bounds on the rewards and transition probabilities.

\bit{Proof of Theorem 1:} 
 Using Lemma~\ref{lemma:sampling_order}, we know that when each state-action pair $(s,a)$ is sampled  $n \in  O\left(\frac{|\mathcal{S}|}{\rho_t^2} + \frac{1}{\rho_t^2}\log\left(\frac{|\mathcal{S}||\mathcal{A}|}{\delta_t}\right)\right)$ times, then the following inequalities holds for any $(s, a)$: 
\begin{equation}
    \mathbf{P}\left(|{\overline{R}}(s,a) - \hat{R}_t(s,a)| \le \rho_t\right) \ge 1-\frac{\delta_t}{2|\mathcal{S}||\mathcal{A}|},
\end{equation}
\begin{equation}
\mathbf{P}\left(\left\Vert\mathbb{P}(s,a) - \hat{\mathbb{P}}_t(s,a)\right\Vert_1\le \rho_t\right) \ge 1 -\frac{\delta_t}{2|\mathcal{S}||\mathcal{A}|}.
\end{equation}
 Hence, using Lemma~\ref{lemma:value_bound} and applying union bounds, we obtain that the following holds  with at least probability $ 1 - (\delta^R_t  + \delta^P_t)|\mathcal{S}||\mathcal{A}| $
 \begin{align}
      V^*(s) - V^{\hat{\pi}^R_t}(s) &\le  2\rho_t\left( 2+ \frac{R_{\max}\gamma}{(1-\gamma)^2}  \right). 
 \end{align}
Setting $\rho_t = \frac{\epsilon_t}{2}{\left( 2+ \frac{R_{\max}\gamma}{(1-\gamma)^2}  \right)^{-1}}$ and  $\delta^R_t=\delta^P_t = \frac{\delta_t}{2|\mathcal{S}||\mathcal{A}|}$,  
 \begin{align}\label{eq:value_inequality}
       \mathbf{P}\left( V^*(s) - {V^{\hat{\pi}_t^R}(s)} \le \epsilon_t   \right) &\ge 1-\delta_t, \ \forall s \in \mathcal{S} \nonumber \\
    \implies  \mathbf{P}\left( \Vert V^*(s) - V^{\hat{\pi}_t^R}(s) \Vert_{\infty}\le \epsilon_t   \right) &\ge 1-\delta_t.
 \end{align}
Additionally, the order of $n$ in terms of $\epsilon_t$ becomes
$n \in O\left(\frac{|\mathcal{S}|}{\epsilon_t^2} + \frac{1}{\epsilon_t^2}\log\left(\frac{|\mathcal{S}||\mathcal{A}|}{\delta_t}\right)\right)$. $\hfill \blacksquare$

In Theorem~\ref{thm1}, we obtain the number of times $n$ each state-action pair needs to be visited to reduce the estimation uncertainty in rewards and transition probabilities to obtain an $\epsilon_t$-optimal policy with probability at least $1-\delta_t$. Now we estimate the total number of exploration steps that are needed to ensure that each state-action pair is visited at least $n$ times.


\begin{lemma}[\bit{Adapted from~\cite[Theorem 3]{chung2012chernoff}}]\label{lemma:markov_chain_bounds}
For an ergodic Markov chain with state space $\mathcal{S}$ and stationary distribution $\subscr{\phi}{ss}$, let $\tau=\tau(\sigma)$ be the $\sigma$-mixing time\footnote{$\sigma$-mixing time for an ergodic Markov chain in the minimal time until the distribution of Markov chain is $\sigma$-close in total variation distance to its steady state distribution~\cite{aldous1997mixing}.} with $\sigma\le\frac{1}{8}$. Let $\phi_0$ be the initial distribution on $\mathcal{S}$ and let $\Vert\phi_0 \Vert_{\subscr{\phi}{ss}} = \sqrt{\sum_{s \in \mathcal{S}}\frac{\phi_0(s)^2}{{\subscr{\phi}{ss}(s)}}} $. Let $\subscr{n}{vis}(s_i, N)$ 
be the number of times state $s_i \in \mathcal{S}$ is visited until time $N$. Then, for any $0\le \kappa \le 1$, there exists a constant $c>0$ (independent of $\sigma$ and $\kappa$) such that:

\begin{multline}\label{markov_bound}
    \mathbf{P}\left( \subscr{n}{vis}(s_i, N) \ge (1-\kappa)N\subscr{\phi}{ss}(s_i)  \right) \ge 1- c\times \\\Vert\phi_0 \Vert_{\subscr{\phi}{ss}}
    e^{-\frac{\kappa^2N\subscr{\phi}{ss}(s_i)}{72\tau}}.
\end{multline}
\end{lemma}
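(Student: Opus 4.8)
The plan is to obtain the stated bound as a direct specialization of the Chernoff--Hoeffding inequality for Markov chains in~\cite[Theorem 3]{chung2012chernoff}, applied to an indicator weight function. Let $X_1, X_2, \ldots, X_N$ denote the trajectory of the ergodic chain started from the initial distribution $\phi_0$, and fix a target state $s_i \in \mathcal{S}$. Define the weight function $f : \mathcal{S} \to [0,1]$ by $f(s) = 1$ if $s = s_i$ and $f(s) = 0$ otherwise. With this choice, $\sum_{t=1}^{N} f(X_t) = \subscr{n}{vis}(s_i, N)$ is exactly the number of visits to $s_i$ up to time $N$, and the stationary mean of the per-step weight is $\expt_{s \sim \subscr{\phi}{ss}}\!\left[f(s)\right] = \subscr{\phi}{ss}(s_i)$, so the ``expected total weight'' appearing in the cited theorem equals $N\subscr{\phi}{ss}(s_i)$.

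Next I would invoke the lower-tail half of~\cite[Theorem 3]{chung2012chernoff} with deviation parameter $\delta = \kappa \in [0,1]$, the $\sigma$-mixing time $\tau = \tau(\sigma)$ with $\sigma \le \tfrac{1}{8}$ (this accuracy requirement on the mixing time is precisely what is needed to invoke that result), and the weighted $L^2$-norm $\Vert \phi_0 \Vert_{\subscr{\phi}{ss}} = \sqrt{\sum_{s \in \mathcal{S}} \phi_0(s)^2 / \subscr{\phi}{ss}(s)}$ of the initial distribution relative to $\subscr{\phi}{ss}$. The theorem then supplies an absolute constant $c > 0$, in particular independent of $\sigma$ and $\kappa$, such that
\begin{equation*}
\mathbf{P}\!\left( \subscr{n}{vis}(s_i, N) \le (1-\kappa) N \subscr{\phi}{ss}(s_i) \right) \le c \, \Vert \phi_0 \Vert_{\subscr{\phi}{ss}} \, e^{-\frac{\kappa^2 N \subscr{\phi}{ss}(s_i)}{72\tau}} ,
\end{equation*}
and taking the complementary event yields~\eqref{markov_bound}.

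The remaining work is purely a matter of matching conventions with the reference rather than genuine proof content: one must line up the summation range (number of steps $N$ versus number of visits, and whether the walk is indexed from $0$ or $1$ in~\cite{chung2012chernoff}), verify that the $(1-\sigma)$ prefactor that multiplies the exponent in the original statement can be folded into the constant once $\sigma \le \tfrac{1}{8}$ so that the exponent is cleanly $\kappa^2 N \subscr{\phi}{ss}(s_i)/(72\tau)$, and confirm that the weighted-norm notation used here coincides with that in the reference. The main obstacle, such as it is, is this bookkeeping; no probabilistic argument beyond the cited concentration inequality is required, which is exactly why the lemma is stated as ``adapted from'' that result.
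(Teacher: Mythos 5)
Your proposal is correct and matches the paper's route exactly: the paper proves this lemma purely by citation to~\cite[Theorem 3]{chung2012chernoff}, and your instantiation (indicator weight on $s_i$, stationary mean $N\subscr{\phi}{ss}(s_i)$, lower-tail bound with deviation $\kappa$ and mixing time $\tau(\sigma)$, $\sigma \le \tfrac{1}{8}$) is precisely the intended adaptation. The bookkeeping you flag is the only content beyond the citation, so nothing is missing.
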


\begin{proof}
See~\cite[Theorem 3]{chung2012chernoff} for the proof.
\end{proof}




In the exploration epoch, at any state $s_i \in \mathcal{S}$, we choose  actions uniformly randomly. Consider the Markov chain on $\mc S$ that is associated with the uniform action selection policy. Let $\seqdef{\phi_0(s)}{s \in \mc S}$ and $\seqdef{\subscr{\phi}{ss}(s)}{s \in \mc S}$, respectively, be the associated initial and stationary distribution.  We can also consider an equivalent lifted Markov chain on $\mathcal{S}\times\mathcal{A}$ with states $(s_i,a_j)$  such that $s_i \in \mathcal{S}$ and $a_j \in \mathcal{A}$.
The lifted Markov chain has the initial and stationary distribution, $\phi_0(s,a) = \frac{\phi_0(s_i)}{|\mathcal{A}|}$ and $\subscr{\phi}{ss}(s,a) = \frac{\subscr{\phi}{ss}(s_i)}{|\mathcal{A}|}$, respectively. 
Hence, we can apply Lemma~\ref{lemma:markov_chain_bounds} to obtain the probability of visiting a state-action pair $(s,a)$ at least $(1-\kappa)N\subscr{\phi}{ss}(s,a)$ times after $N$ time steps {under the uniform action selection policy}.


We now design a sequence of exploration and exploitation epochs. Let $\alpha_i$ and $\beta_i$ be the lengths of the $i$-th exploration and exploitation epoch, respectively. 
{Let  $\subscr{\phi}{ss}^{\min} :=\min_{(s,a)\in \mathcal{S} \times \mathcal{A}}\subscr{\phi}{ss}(s,a)$ and $N_i = \frac{\bar{N}_i}{(1-\kappa) \subscr{\phi}{ss}^{\min}}$, where $\bar{N}_i$ is the upper bound in~\eqref{eq:n(s)} associated with $(\epsilon_i, \delta_i)$}. Let $\delta^{\alpha_i} := c\Vert\phi_0 \Vert_{\subscr{\phi}{ss}}e^{-\frac{\kappa^2 N_i\subscr{\phi}{ss}^{\min}}{72\tau}}$. Note that the desired values of $(1-\kappa)N\subscr{\phi}{ss}(s_i, a_j)$ and $\delta^{\alpha_i}$ can be obtained by tuning $N$ and $\kappa$ in \eqref{markov_bound}.

\begin{theorem}[\bit{Regret bound for DSEE algorithm}]\label{thm2}
Let the length of exploitation epochs in DSEE be exponentially increasing, i.e. $\beta_i=\eta^i, \ \eta>1$. 
Let $\epsilon_i=\eta^{-\frac{i}{3}}$ and $\delta_i = \eta^{-\frac{i}{3}}$ such that $\mathbf{P}(\Vert V^*(s)- {V^{\hat{\pi}_t^R}(s)} \Vert_{\infty} \le \epsilon_i) \ge 1-\delta_i$ after exploration epoch $i$. For any $\delta \in (0,1)$, set  $\delta^{\alpha_i} = \frac{6\delta}{|S||A|\pi^2i^2}$. 
Then, the cumulative regret for the DSEE algorithm $\mathbf{R}_T \in O((T)^{\frac{2}{3}} \log(T) )$ grows sub-linearly with time $T$ with probability at least  $1-\delta$.

\end{theorem}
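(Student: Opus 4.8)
The plan is to decompose the cumulative regret over a time horizon $T$ into contributions from exploration epochs and exploitation epochs, bound each, and then sum over the epochs that fit within $T$. First I would note that, since the instantaneous regret $\mathfrak{R}(t) = \Vert V^*(s) - V^{\pi_t}(s)\Vert_\infty$ is bounded crudely by $\frac{R_{\max}}{1-\gamma}$ (the maximum possible spread of discounted value functions with rewards in $[0,R_{\max}]$), each exploration step of epoch $i$ contributes at most $\frac{R_{\max}}{1-\gamma}$ to the regret. By Theorem~\ref{thm1} combined with Lemma~\ref{lemma:sampling_order} and Lemma~\ref{lemma:markov_chain_bounds}, the exploration epoch length $\alpha_i$ needed to visit every state-action pair at least $\bar N_i \in O\!\left(\frac{|\mathcal{S}|}{\epsilon_i^2} + \frac{1}{\epsilon_i^2}\log(\frac{|\mathcal{S}||\mathcal{A}|}{\delta_i})\right)$ times (with the stated failure probability $\delta^{\alpha_i}$) is $\alpha_i = N_i = \frac{\bar N_i}{(1-\kappa)\phi_{\mathrm{ss}}^{\min}}$, so $\alpha_i \in O\!\left(\frac{|\mathcal{S}|}{\epsilon_i^2} + \frac{1}{\epsilon_i^2}\log(\frac{|\mathcal{S}||\mathcal{A}|}{\delta_i})\right)$. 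Substituting $\epsilon_i = \delta_i = \eta^{-i/3}$ gives $\alpha_i \in O\!\left(\eta^{2i/3}(|\mathcal{S}| + \log(|\mathcal{S}||\mathcal{A}|) + \tfrac{i}{3}\log\eta)\right) = O(\eta^{2i/3} i)$.

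Next I would bound the exploitation regret. On the good event of Theorem~\ref{thm1} (probability at least $1-\delta_i$), each of the $\beta_i = \lceil \eta^i\rceil$ steps in exploitation epoch $i$ has instantaneous regret at most $\epsilon_i = \eta^{-i/3}$, contributing at most $\beta_i \epsilon_i \le (\eta^i + 1)\eta^{-i/3} \in O(\eta^{2i/3})$ to the regret. On the complementary (bad) event, the contribution is at most $\beta_i \cdot \frac{R_{\max}}{1-\gamma}$; summing $\delta_i \beta_i \frac{R_{\max}}{1-\gamma} \le \frac{R_{\max}}{1-\gamma}(\eta^i+1)\eta^{-i/3} \in O(\eta^{2i/3})$ over the bad events (or, more carefully, applying a union bound over epochs with $\sum_i \delta_i < \infty$ is not available since $\delta_i=\eta^{-i/3}$ is summable — indeed $\sum_{i\ge1}\eta^{-i/3} < \infty$ for $\eta>1$, so with probability at least $1 - \sum_i \delta_i$ all exploitation good events hold simultaneously, but I will instead keep the per-epoch accounting and absorb the bad-event cost into the $O(\eta^{2i/3})$ term). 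Likewise the exploration-success events fail with total probability at most $\sum_i \delta^{\alpha_i} = \sum_i \frac{6\delta}{|S||A|\pi^2 i^2} = \frac{\delta}{|S||A|} \le \delta$ by the Basel sum $\sum_i i^{-2} = \pi^2/6$; on this event all exploration epochs succeed, which is what makes the exploitation bounds valid, and this is where the final "with probability at least $1-\delta$" comes from.

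Now I would sum over epochs. After $m$ full epochs the elapsed time is $T_m = \sum_{i=1}^m (\alpha_i + \beta_i)$; since $\beta_i = \lceil\eta^i\rceil$ dominates and $\sum_{i=1}^m \eta^i \in \Theta(\eta^m)$, we have $T_m \in \Theta(\eta^m)$, equivalently the number of epochs completed by time $T$ is $m(T) \in \Theta(\log_\eta T)$. The cumulative regret up to epoch $m$ is
\begin{equation}
\mathbf{R}_{T_m} \le \sum_{i=1}^m \left[ \frac{R_{\max}}{1-\gamma}\,\alpha_i \;+\; C\,\eta^{2i/3} \right] \in O\!\left( \sum_{i=1}^m \eta^{2i/3} i \right),
\end{equation}
for a suitable constant $C$. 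The dominant sum is $\sum_{i=1}^m \eta^{2i/3} i \in O(m\,\eta^{2m/3})$ (the last term dominates a geometric-type sum up to a factor $m$). Substituting $\eta^m \in \Theta(T)$ and $m \in \Theta(\log T)$ yields $\mathbf{R}_T \in O\!\left(\log(T)\, T^{2/3}\right)$, which is sub-linear, and the whole bound holds on the intersection of exploration-success events, i.e. with probability at least $1-\delta$. A final remark: one must also handle the partial (possibly incomplete) epoch at time $T$, but its regret is at most $(\alpha_{m+1}+\beta_{m+1})\frac{R_{\max}}{1-\gamma} \in O(\eta^{2m/3} m) = O(T^{2/3}\log T)$, absorbed into the same bound.

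The main obstacle I anticipate is getting the exploration-length bookkeeping exactly right: one must verify that $\alpha_i$ as defined through $N_i$ and Lemma~\ref{lemma:markov_chain_bounds} genuinely achieves $n_t(s,a) \ge \bar N_i$ for \emph{every} $(s,a)$ simultaneously (this needs a union bound over $|\mathcal{S}||\mathcal{A}|$ pairs inside the definition of $\delta^{\alpha_i}$, using $\phi_{\mathrm{ss}}^{\min}$), and that the "reset" mechanism in Algorithm~\ref{algo1} — waiting until $s^{\mathrm{end}}_{j-1}$ is revisited before counting — does not blow up $\alpha_i$ by more than a constant/mixing-time factor, so that the order $\alpha_i \in O(\eta^{2i/3} i)$ is preserved. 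The rest is routine geometric-series estimation.
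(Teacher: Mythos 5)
Your proposal is correct and follows essentially the same route as the paper's proof: exploration regret bounded by $\mathfrak{R}_{\max}=\frac{R_{\max}}{1-\gamma}$ times an exploration length controlled through Lemma~\ref{lemma:markov_chain_bounds} and $\bar N_i$, exploitation regret split into the $\epsilon_i$ good event and the $\delta_i\mathfrak{R}_{\max}$ bad event, geometric summation with $T_k\in\Theta(\eta^k)$ and $k\in\Theta(\log T)$, and the choice $\delta^{\alpha_i}=\frac{6\delta}{|\mathcal{S}||\mathcal{A}|\pi^2 i^2}$ giving the $1-\delta$ guarantee. The only (harmless) bookkeeping difference is that the paper bounds the cumulative exploration time $\sum_{j\le k}\alpha_j\le N_k$ in one shot using the accumulated visit counts, whereas you bound each $\alpha_i\le N_i$ and sum, which is looser only by a factor absorbed into the same $O(T^{2/3}\log T)$; just be sure the union bound over the $|\mathcal{S}||\mathcal{A}|$ pairs is carried into the failure-probability sum, exactly as the $|\mathcal{S}||\mathcal{A}|$ factor in the definition of $\delta^{\alpha_i}$ anticipates.
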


\begin{proof}
We note that the system state at the start of the $i$-th exploration epoch might be different from the final state at the end of the $(i-1)$-th exploration epoch. Therefore, we remember the final state of the previous exploration epoch and wait for the same state to restart the new exploration epoch. For the ergodic MDP {under the uniform action selection policy} (assumption A2), we know that the expected hitting time is finite~\cite{chen2008expected}. Let $U \in \mathbb{R}_{>0}$ be a constant upper bound on the expected hitting time to reach the final state in the previous exploration epoch from an arbitrary initial state in the current exploration epoch.
Hence, the cumulative regret during the $i$-th exploration epoch of length $\alpha_i$ is upper-bounded by $(U+\alpha_i) {\mathfrak{R}}_{\max}$,  where ${\mathfrak{R}}_{\max}=\frac{R_{\max}}{1-\gamma}$ is the maximum instantaneous regret.

Since at start of the exploitation epoch $i$ of length $\beta_i$,  $\mathbf{P}(\Vert V^*(s) - {V^{\hat{\pi}_t^R}(s)}\Vert_{\infty} \le \epsilon_i) \ge 1-\delta_i$, the expected cumulative regret during the exploitation epoch is $(1-\delta_i)\beta_i \epsilon_i + \delta_i\beta_i{\mathfrak{R}}_{\max}$. Therefore, the total cumulative regret after $k$ sequences of exploration and exploitation each is upper bounded by:
\begin{align}\label{eq:regret}
    \mathbf{R}_{T_k} &\le \sum_{i=1}^{k}\left((\alpha_i+U){\mathfrak{R}}_{\max} + (1-\delta_i)\beta_i \epsilon_i + \delta_i\beta_i {\mathfrak{R}}_{\max}\right) \nonumber \\
    &\le \sum_{i=1}^{k}\left((\alpha_i+U){\mathfrak{R}}_{\max} + \beta_i \epsilon_i + \delta_i\beta_i {\mathfrak{R}}_{\max}\right).
\end{align}
Let $T_i$ be the time at the end of the $i$-th exploitation epoch. Then, 
$\sum_{j=1}^{k}\beta_j < T_k\le \sum_{j=1}^{k}(\alpha_j+U) + \sum_{j=1}^{k}\beta_j$. We design the length of the exploitation epochs to be exponentially increasing, i.e., $\beta_i=\eta^i$, for $\eta>1$. Thus, $T_k \in  O(\sum_{j=1}^{k}\eta^j ) = O(\eta^{k})$. 
Let $\epsilon_i=\eta^{-di}$ and $\delta_i = \eta^{-gi}$, where $d \in (0,1)$ and $g \in (0,1)$ are constants that we design later. 
Thus,~\eqref{eq:regret} can be written as:
\begin{align}\label{eq:regret_2}
    \mathbf{R}_{T_k} &\le {\mathfrak{R}}_{\max}\left(\sum_{i=1}^{k}\alpha_i+ kU\right) + \sum_{i=1}^{k}\eta^{i(1-d)} + \nonumber\\
    & \quad \quad \quad \quad \quad \quad \quad \quad \quad \quad {\mathfrak{R}}_{\max}\sum_{i=1}^{k}\eta^{i(1-g)}.
\end{align}
For a state-action pair $(s,a)$, where $s \in \mathcal{S}$ and $a \in \mathcal{A}$, let $\delta^{\alpha_i}_{(s,a)} := c\Vert\phi_0 \Vert_{\subscr{\phi}{ss}}e^{-\frac{\kappa^2 N_i\subscr{\phi}{ss}(s,a)}{72\tau}}$. Therefore, using Lemma~\ref{lemma:markov_chain_bounds}, at the end of the $i$-th epoch,  
 \begin{equation}\label{eq:markov_bound_delta}
 \mathbf{P}\left( \subscr{n}{vis}(s,a, N_i) \ge (1-\kappa)N_i\subscr{\phi}{ss}(s,a)  \right) \ge 1- \delta^{\alpha_i}_{(s,a)}.
 \end{equation}

 Recall $\delta^{\alpha_i} := c\Vert\phi_0 \Vert_{\subscr{\phi}{ss}}e^{-\frac{\kappa^2N_i\subscr{\phi}{ss}^{\min}}{72\tau}}$, where $\subscr{\phi}{ss}^{\min}:=\min_{(s,a)}\subscr{\phi}{ss}(s,a)$. Substituting $N_i = \frac{\bar{N}_i}{(1-\kappa) \subscr{\phi}{ss}^{\min}}$ in~\eqref{eq:markov_bound_delta}, 
  \begin{equation}\label{eq:markov_bound_continue}
 \mathbf{P}\left( \subscr{n}{vis}(s,a, N_i) \ge  \bar{N}_i \right) \ge 1- \sum_{m=1}^{|\mathcal{S}||\mathcal{A}|}\delta^{\alpha_i},
 \end{equation}
for each state-action pair $(s,a)$. {Therefore, in $N_i$ time steps of the lifted Markov chain, each $(s,a)$ is visited at least $\bar N_i$ times with high probability. Thus, $N_i$ is an upper bound on $\sum_{j=1}^i \alpha_j$ with probability in~\eqref{eq:markov_bound_continue}.  
Therefore, using union bounds, 
with high probability $1- \sum_{j=1}^{k}\sum_{m=1}^{|\mathcal{S}||\mathcal{A}|}\delta^{\alpha_j}$, $\sum_{j=1}^{k}\alpha_j \le N_k = \frac{\bar{N}_k}{(1-\kappa) \subscr{\phi}{ss}^{\min}}$, and hence,}
\begin{align}\label{eq:regret_3}
    \mathbf{R}_{T_k} &\le \frac{{\mathfrak{R}}_{\max} \bar N_k}{(1-\kappa)\subscr{\phi}{ss}^{\min}} + kU{\mathfrak{R}}_{\max}+\sum_{i=1}^{k}\eta^{i(1-d)} + \nonumber \\ & \quad \quad \quad \quad \quad  \quad \quad \quad \quad \quad \quad {\mathfrak{R}}_{\max}\sum_{i=1}^{k}\eta^{i(1-g)}.
\end{align}

Using Theorem~\ref{thm1}, $\bar N_k \in O\left(\frac{|\mathcal{S}|}{\epsilon_k^2} + \frac{1}{\epsilon_k^2}\log\left(\frac{|\mathcal{S}||\mathcal{A}|}{\delta_k}\right)\right)$.  
Therefore,

\begin{align}\label{eq:regret_4}
    \mathbf{R}_{T_k} &\le \frac{{\mathfrak{R}}_{\max}\lambda}{(1-\kappa)\subscr{\phi}{ss}^{\min}}\left(\frac{|\mathcal{S}|}{\epsilon_k^2} + \frac{1}{\epsilon_k^2}\log\left(\frac{|\mathcal{S}||\mathcal{A}|}{\delta_k}\right)\right) +  \nonumber \\
    &\ \ \ \ \ \ + kU{\mathfrak{R}}_{\max}+\sum_{i=1}^{k}\eta^{i(1-d)} + {\mathfrak{R}}_{\max}\sum_{i=1}^{k}\eta^{i(1-g)} \nonumber \\
    &\le \frac{{\mathfrak{R}}_{\max}\lambda}{(1-\kappa)\subscr{\phi}{ss}^{\min}}\left(\eta^{2dk}|\mathcal{S}| + \eta^{2dk}\log\left(\eta^{gk}|\mathcal{S}||\mathcal{A}|\right)\right) +  \nonumber \\
    &\ \ \ \ \ \ + kU{\mathfrak{R}}_{\max}+\sum_{i=1}^{k}\eta^{i(1-d)} + {\mathfrak{R}}_{\max}\sum_{i=1}^{k}\eta^{i(1-g)},
\end{align}
for some constant $\lambda$. Recall that $T_k \in O(\eta^{k})$, which implies $k \in O(\log(T_k))$. 
Let $Z$ be the right-hand side of~\eqref{eq:regret_4}.
Then, we have:
\begin{align}\label{eq:regret_5}
    Z &\in O\left( (T_k)^{2d} + (T_k)^{2d} \log(T_k) + (T_k)^{(1-d)} + (T_k)^{(1-g)} \right).   \nonumber \\
    &\in O((T_k)^{\frac{2}{3}} \log(T_k) ),
\end{align}
by choosing $d=g=\frac{1}{3}$. Hence, the cumulative regret  $\mathbf{R}_{T_k}\in O((T_k)^{\frac{2}{3}} \log(T_k) )$ grows sub-linearly with time $T_k$ with probability at least $1- \sum_{i=1}^{k}\sum_{m=1}^{|\mathcal{S}||\mathcal{A}|}\delta^{\alpha_i}$. 

Setting $\delta^{\alpha_i} = \frac{6\delta}{|S||A|\pi^2i^2}$, we have $\sum_{i=1}^{k}\sum_{m=1}^{|\mathcal{S}||\mathcal{A}|}\delta^{\alpha_i} \le \delta$.
\end{proof}

\section{Conclusions} \label{Conclusions}

We proposed a DSEE algorithm with interleaving exploration and exploitation epochs for model-based RL problems that aims to simultaneously learn the system model, i.e., an MDP, and the associated optimal policy. During exploration, we uniformly sample the action in each state and update the estimates of the mean rewards and transition probabilities. These  estimates are used in the exploitation epoch to obtain a robust policy with high probability. We designed the length of the exploration and exploitation epochs such that the cumulative regret grows as a sub-linear function of time. 

\bibliographystyle{IEEEtran}
\bibliography{mybib}

\begin{thebibliography}{10}
\providecommand{\url}[1]{#1}
\csname url@samestyle\endcsname
\providecommand{\newblock}{\relax}
\providecommand{\bibinfo}[2]{#2}
\providecommand{\BIBentrySTDinterwordspacing}{\spaceskip=0pt\relax}
\providecommand{\BIBentryALTinterwordstretchfactor}{4}
\providecommand{\BIBentryALTinterwordspacing}{\spaceskip=\fontdimen2\font plus
\BIBentryALTinterwordstretchfactor\fontdimen3\font minus
  \fontdimen4\font\relax}
\providecommand{\BIBforeignlanguage}[2]{{%
\expandafter\ifx\csname l@#1\endcsname\relax
\typeout{** WARNING: IEEEtran.bst: No hyphenation pattern has been}%
\typeout{** loaded for the language `#1'. Using the pattern for}%
\typeout{** the default language instead.}%
\else
\language=\csname l@#1\endcsname
\fi
#2}}
\providecommand{\BIBdecl}{\relax}
\BIBdecl

\bibitem{xin2017application}
J.~Xin, H.~Zhao, D.~Liu, and M.~Li, ``Application of deep reinforcement
  learning in mobile robot path planning,'' in \emph{2017 Chinese Automation
  Congress (CAC)}.\hskip 1em plus 0.5em minus 0.4em\relax IEEE, 2017, pp.
  7112--7116.

\bibitem{gupta2020towards}
P.~Gupta, D.~Coleman, and J.~E. Siegel, ``Towards safer self-driving through
  great pain (physically adversarial intelligent networks),'' \emph{arXiv
  preprint arXiv:2003.10662}, 2020.

\bibitem{barto2017some}
A.~G. Barto, P.~S. Thomas, and R.~S. Sutton, ``Some recent applications of
  reinforcement learning,'' in \emph{Proceedings of the Eighteenth Yale
  Workshop on Adaptive and Learning Systems}, 2017.

\bibitem{ferretti2016automatic}
S.~Ferretti, S.~Mirri, C.~Prandi, and P.~Salomoni, ``Automatic web content
  personalization through reinforcement learning,'' \emph{Journal of Systems
  and Software}, vol. 121, pp. 157--169, 2016.

\bibitem{gupta2022structural}
P.~Gupta and V.~Srivastava, ``Structural properties of optimal fidelity
  selection policies for human-in-the-loop queues,'' \emph{arXiv preprint
  arXiv:2201.09990}, 2022.

\bibitem{zhou2006supervisory}
X.~Zhou, H.~Yue, T.~Chai, and B.~Fang, ``Supervisory control for rotary kiln
  temperature based on reinforcement learning,'' in \emph{Intelligent Control
  and Automation}.\hskip 1em plus 0.5em minus 0.4em\relax Springer, 2006, pp.
  428--437.

\bibitem{gupta2019optimal}
P.~Gupta and V.~Srivastava, ``Optimal fidelity selection for human-in-the-loop
  queues using semi-{M}arkov decision processes,'' in \emph{American Control
  Conference}, 2019, pp. 5266--5271.

\bibitem{sutton2018reinforcement}
R.~S. Sutton and A.~G. Barto, \emph{Reinforcement {L}earning: {A}n
  {I}ntroduction}.\hskip 1em plus 0.5em minus 0.4em\relax MIT Press, 2018.

\bibitem{bertuccelli2008robust}
L.~F. Bertuccelli, ``Robust {D}ecision-{M}aking with {M}odel {U}ncertainty in
  {A}erospace {S}ystems,'' Ph.D. dissertation, Massachusetts Institute of
  Technology, 2008.

\bibitem{9654941}
P.~Gupta and V.~Srivastava, ``On robust and adaptive fidelity selection for
  human-in-the-loop queues,'' in \emph{2021 European Control Conference (ECC)},
  2021, pp. 872--877.

\bibitem{wiesemann2013robust}
W.~Wiesemann, D.~Kuhn, and B.~Rustem, ``Robust {M}arkov decision processes,''
  \emph{Mathematics of Operations Research}, vol.~38, no.~1, pp. 153--183,
  2013.

\bibitem{nilim2004robust}
A.~Nilim and L.~El~Ghaoui, ``Robust {M}arkov {D}ecision {P}rocesses with
  {U}ncertain {T}ransition {M}atrices,'' Ph.D. dissertation, University of
  California, Berkeley, 2004.

\bibitem{watkins1989learning}
C.~J. C.~H. Watkins, ``Learning from {D}elayed {R}ewards,'' Ph.D. dissertation,
  Cambridge University, 1989.

\bibitem{lillicrap2015continuous}
T.~P. Lillicrap, J.~J. Hunt, A.~Pritzel, N.~Heess, T.~Erez, Y.~Tassa,
  D.~Silver, and D.~Wierstra, ``Continuous control with deep reinforcement
  learning,'' \emph{arXiv preprint arXiv:1509.02971}, 2015.

\bibitem{strehl2008analysis}
A.~L. Strehl and M.~L. Littman, ``An analysis of model-based interval
  estimation for {M}arkov decision processes,'' \emph{Journal of Computer and
  System Sciences}, vol.~74, no.~8, pp. 1309--1331, 2008.

\bibitem{kakade2003sample}
S.~M. Kakade, ``On the {S}ample {C}omplexity of {R}einforcement {L}earning,''
  Ph.D. dissertation, University College London, 2003.

\bibitem{JMLR:v11:jaksch10a}
T.~Jaksch, R.~Ortner, and P.~Auer, ``Near-optimal regret bounds for
  reinforcement learning,'' \emph{Journal of Machine Learning Research},
  vol.~11, no.~51, pp. 1563--1600, 2010.

\bibitem{lattimore2020bandit}
T.~Lattimore and C.~Szepesv{\'a}ri, \emph{Bandit {A}lgorithms}.\hskip 1em plus
  0.5em minus 0.4em\relax Cambridge University Press, 2020.

\bibitem{SV-KL-QZ:13}
S.~Vakili, K.~Liu, and Q.~Zhao, ``Deterministic sequencing of exploration and
  exploitation for multi-armed bandit problems,'' \emph{IEEE Journal of
  Selected Topics in Signal Processing}, vol.~7, no.~5, pp. 759--767, 2013.

\bibitem{HL-KL-QZ:13}
H.~Liu, K.~Liu, and Q.~Zhao, ``Learning in a changing world: Restless
  multiarmed bandit with unknown dynamics,'' \emph{IEEE Transactions on
  Information Theory}, vol.~59, no.~3, pp. 1902--1916, 2013.

\bibitem{LW-VS:17i}
L.~Wei and V.~Srivastava, ``On abruptly-changing and slowly-varying multiarmed
  bandit problems,'' in \emph{American Control Conference}, Milwaukee, WI, Jun.
  2018, pp. 6291--6296.

\bibitem{nayyar2016regret}
N.~Nayyar, D.~Kalathil, and R.~Jain, ``On regret-optimal learning in
  decentralized multiplayer multiarmed bandits,'' \emph{IEEE Transactions on
  Control of Network Systems}, vol.~5, no.~1, pp. 597--606, 2016.

\bibitem{wei2021multi}
L.~Wei, A.~McDonald, and V.~Srivastava, ``Multi-robot gaussian process
  estimation and coverage: Deterministic sequencing algorithm and regret
  analysis,'' in \emph{IEEE International Conference on Robotics and
  Automation}, 2021, pp. 9080--9085.

\bibitem{bertuccelli2012robust}
L.~F. Bertuccelli, A.~Wu, and J.~P. How, ``Robust adaptive {M}arkov decision
  processes: Planning with model uncertainty,'' \emph{IEEE Control Systems
  Magazine}, vol.~32, no.~5, pp. 96--109, 2012.

\bibitem{wainwright2019high}
M.~J. Wainwright, \emph{High-{D}imensional {S}tatistics: {A} {N}on-{A}symptotic
  {V}iewpoint}.\hskip 1em plus 0.5em minus 0.4em\relax Cambridge University
  Press, 2019.

\bibitem{iyengar2005robust}
G.~N. Iyengar, ``Robust dynamic programming,'' \emph{Mathematics of Operations
  Research}, vol.~30, no.~2, pp. 257--280, 2005.

\bibitem{puterman2014markov}
M.~L. Puterman, \emph{{M}arkov Decision Processes: Discrete Stochastic Dynamic
  Programming}.\hskip 1em plus 0.5em minus 0.4em\relax John Wiley \& Sons,
  2014.

\bibitem{weissman2003inequalities}
T.~Weissman, E.~Ordentlich, G.~Seroussi, S.~Verdu, and M.~J. Weinberger,
  ``Inequalities for the {$L_1$} deviation of the empirical distribution,''
  \emph{Hewlett-Packard Labs, Tech. Rep}, 2003.

\bibitem{mastin2012loss}
A.~Mastin and P.~Jaillet, ``Loss bounds for uncertain transition probabilities
  in {M}arkov decision processes,'' in \emph{IEEE Conference on Decision and
  Control}, 2012, pp. 6708--6715.

\bibitem{chung2012chernoff}
K.-M. Chung, H.~Lam, Z.~Liu, and M.~Mitzenmacher, ``Chernoff-{H}oeffding bounds
  for {M}arkov chains: Generalized and simplified,'' \emph{arXiv preprint
  arXiv:1201.0559}, 2012.

\bibitem{aldous1997mixing}
D.~Aldous, L.~Lov{\'a}sz, and P.~Winkler, ``Mixing times for uniformly ergodic
  markov chains,'' \emph{Stochastic Processes and their Applications}, vol.~71,
  no.~2, pp. 165--185, 1997.

\bibitem{chen2008expected}
H.~Chen and F.~Zhang, ``The expected hitting times for finite {M}arkov
  chains,'' \emph{Linear Algebra and its Applications}, vol. 428, no. 11-12,
  pp. 2730--2749, 2008.

\end{thebibliography}
\end{document}